\theoremstyle{thmstyleone}
\theoremstyle{thmstyletwo}%
\theoremstyle{thmstylethree}%
\newcommand{\pomdp}{\mathcal{N}}
\newcommand{\real}{\mathbb{R}}
\newcommand{\pr}{P}
\title{Synthetic POMDPs to Challenge Memory-Augmented RL: Memory Demand Structure Modeling}
\author[1]{Yongyi~WANG}
\author[1]{Lingfeng~LI}
\author[1]{Bozhou~CHEN}
\author[1]{Ang~LI}
\author[1]{Hanyu~LIU}
\author[1]{Qirui~ZHENG}
\author[1]{Xionghui~YANG}
\author[1,+]{Wenxin~LI}
\address[1]{School of Computer Science, Peking University, Beijing 100871, China}
\begin{document}
\section{Introduction}
Partially Observable Markov Decision Processes (POMDPs) provide a canonical framework for reinforcement learning (RL) under partial observability, where memory-augmented RL algorithms such as PPO \cite{schulman2017proximal} with LSTM \cite{hochreiter1997long} compress trajectories to infer hidden states. While both real-world and synthetic POMDP benchmarks exist \cite{cobbe2020leveraging,hafner2021benchmarking,morad2023popgym,tao2025benchmarking,shchendrigin2026memory}, synthetic environments offer controllability for targeted evaluation.

Current synthetic POMDPs are either built from scratch or derived from MDPs via masking, noise, or reward redistribution. However, two issues remain: the lack of a theoretical foundation for designing memory demands, and the inadequacy of existing memory difficulty adjustments: masking neglects information value difference, and scaling alters the belief MDP. Consequently, performance degradation cannot be attributed solely to increasing memory demands.

To address these limitations, we introduce a principled framework for synthesizing POMDPs that systematically vary memory demands while preserving the underlying belief MDP, ensuring that difficulty stems solely from memory demands.

Our main contributions are:

1. \textbf{A principled theoretical framework} to analyze POMDPs via Memory Demand Structure (MDS) and transition invariance. MDS defines the minimal history required for optimal decisions; invariance properties (stationarity, consistency) formalize how dynamics generalize across different temporal steps and trajectories.

2. \textbf{A general construction methodology} using autoregressive processes, state aggregation, and reward delay to design POMDPs with predefined MDS. Autoregressive processes enable direct construction; state aggregation and reward delay derive POMDPs from MDPs while preserving optimal policies.

3. \textbf{A suite of lightweight, scalable, fine-grained diagnostic POMDP environments} with progressively increasing memory demand that complements existing benchmarks, which
enables targeted evaluation of memory models’ capabilities including handling different levels of temporal dependencies, non-stationarity and non-consistency.

Technical details and data are given in Online Resource 1.

\section{Theory of Memory Demand Structure Modeling}
To characterize what POMDPs demand from memory-augmented RL, we introduce the MDS framework, which identifies the minimal set of past moments an agent must retain to predict the next observation and reward. We leverage MDS to design synthetic environments.
\begin{definition}[\textbf{MDS}] For any POMDP $\pomdp$'s trajectory $h_t=(z_{0:t},a_{0:t},r_{0:t-1})$, a set $D\subseteq\{0,1,\cdots,t\}$ is called $h_t$'s \textbf{MDS} if $\forall z_{t+1}\in Z,~\forall r_t\in\real$, \label{def:mds}
$$\pr(z_{t+1},r_t|z_{0:t},a_{0:t})\!=\!\pr(z_{t+1},r_t|\bigcap_{\tau\in D}\{Z_\tau\!=\!z_\tau, A_\tau\!=\!a_\tau\})$$
\end{definition}
Given that memory models, particularly RNNs, tend to exhibit recency bias and limited capacity, we select the MDS based on the "fewest time steps" or "temporally nearest" criterion. This allows us to establish a lower bound on the POMDP memory difficulty that aligns with these inherent characteristics of memory models.
However, while MDS specifies which historical moments a transition in a given POMDP trajectory depends on, it does not quantify the amount of information contributed by observations at those moments. To address this, we introduce the concept of Memory Complexity Polynomial (MCP) as a quantitative tool for measuring the informational contribution of observations at each MDS-identified moment to the transition.
\begin{definition}[\textbf{MCP}]
The MCP of a POMDP $\pomdp$ trajectory $h_t$ is
$$d_{h_t}(x)\coloneqq\frac{1}{|D|!}\!\!\sum_{\sigma}\!\sum_{\tau\in D}I((z,a)_\tau;(z_{t+1},r_t)|(z,a)_{\{\tau^\prime:\sigma(\tau^\prime)<\sigma(\tau)\}})x^{t-\tau}\label{eq:mcpdef}$$
where $I(\cdot)$ is mutual information, and $\sigma$ traverse $D$'s all permutations.
\end{definition}
MCP models moments in the MDS as players in a cooperative game, where moment $\tau\in D$ corresponds to the term $x^{t-\tau}$. With a Shapley value-like form and the chain rule of mutual information, it distributes the total information of a single-step transition across these moments.
Although potentially broadly applicable, MCP's high computational cost limits its use primarily to synthetic POMDP analysis and construction. Extending it to complex real-world environments requires efficiency gains via simplifying assumptions, sampling approximations, and parallelized estimation of mutual information and Shapley values.

MDS and MCP capture the temporal dependencies along POMDP trajectories, but whether the transition dynamics is stationary over time or consistent across trajectories remains uncaptured. To address this, we introduce the concept of transition invariance, which characterizes the generalization of the transition dynamics.
\begin{definition}[\textbf{POMDP's Transition Invariance}]
Given a POMDP $\pomdp$ and an equivalence relation $\simeq$ on its trajectory set $H$, $\pomdp$ is called \textbf{$\simeq$-invariant} if for any $h_{t_1},h^\prime_{t_2}\in H$ such that $h_{t_1}\simeq h^\prime_{t_2}$, the following holds: $\forall z\in Z,~\forall r\in\real,~P(z,r|z_{0:t_1},a_{0:t_1})=P(z,r|z_{0:t_2}^\prime,a_{0:t_2}^\prime)$.
\label{def:hdpinvariance}
\end{definition}
\begin{table}[h]
\centering
\begin{tabular}{|c|c|c|}
\hline
\textbf{Invariance} & \textbf{Stationary} & \textbf{Non-stationary} \\
\hline
\textbf{Consistent} & $\simeq_k$ & $\simeq_k\wedge\simeq^n$ \\
\hline
\textbf{Non-consistent} & $\simeq_k\wedge\simeq^{\lfloor mz_0\rfloor}$ & $\simeq_k\wedge\simeq^n\wedge\simeq^{\lfloor mz_0\rfloor}$ \\
\hline
\end{tabular}
\caption{Example of different transition invariances. $\simeq_k$ denotes a relation where trajectories $h_{t_1},h_{t_2}^\prime$ are equivalent if $(z,a)_{t_1-k+1:t_1}=(z^\prime,a^\prime)_{t_2-k+1:t_2}$; $\simeq^n$ where equivalent if $\lfloor t_1/n\rfloor=\lfloor t_2/n\rfloor$; $\simeq^{\lfloor mz_0\rfloor}$ where equivalent if $\exists i,z_0,z_0^\prime\in[\frac{i}{m},\frac{i+1}{m})$.}
\label{tab:hdps}
\end{table}
\section{Synthetic POMDP Environments}
\label{sec:syntheticenvs}
\subsection{Constructed: Linear \& Nonlinear Autoregressive Processes}
\label{subsec:linproc}
To construct a POMDP that satisfies a given MDS, the transition dynamics can be directly defined via an autoregressive mechanism. 
Motivated by the prevalence of linear systems in real-world control, we focus on POMDPs whose MDS mirrors that of a $k$-th order MDP, that is, for any history $h_t$, $z_{t+1},r_t$ depends only on $z_{t-k+1:t}$.
Within this MDS family, we implement the dynamics using autoregressive processes: the first $k$ observations are drawn from $U(0,1)$, and each subsequent is a linear combination of the previous $k$ ones (linear) or their square (nonlinear), taken modulo $1$ to ensure boundedness.
The coefficients are selected according to the four transition invariance types in Def. \ref{tab:hdps}, with distinct values assigned to non-equivalent transition steps.
The reward mechanism gives a reward of $1$ for correctly selecting the interval that contains the next observation.

Specifically, we define $k$-th order autoregressive process environments. 
AllEqOneLinProcEnv($k$) sets all coefficients to $1$, while \{All, Time, Traj, No\}EqLinProcEnv($k$) use distinct coefficients matching the transition invariance types in Table \ref{tab:hdps} (C\&S, N\&S, C\&N, N\&N).
AllEqOneQuadProcEnv($k$) replaces AllEqOneLinProcEnv($k$)'s linear sum  with a sum of squares, introducing nonlinearity.
\subsection{Derived: Convolving MDP States}
\label{subsec:mdpconv}
Modifying existing MDPs to endow them with a specified MDS leverages the richness of available environments and avoids the oversimplified or idealized constructions that come with manually designed dynamics.
Inspired by echo and denoising in signal processing, we construct POMDPs by applying invertible convolution to existing MDP states, which provably preserves the optimal policy. Specifically, at each step, the observation is generated by convolving the previous $k$ MDP states with a fixed kernel. Once the original MDP state contains no decision-irrelevant information, this setup forces the memory model to learn a deconvolution that recovers the underlying states. 

Specifically, StateConv\_\{n,p\}(k) is a family of wrappers that convolve MDP states with coefficients: $1, \pm(1-2^{-k})$ (with $+$ for \_p and $-$ for \_n). The resulting MDS spans all time steps, and each state's contribution to decoding the current observation decays exponentially with temporal distance.

\subsection{Derived: Delaying MDP Rewards}
\label{subsec:mdpdelay}
State convolution yields non-Markovian transitions, while reward delaying produce non-Markovian rewards, which are also common in real-world settings.
Although reward delay preserves the Markovian optimal policy in theory, delayed rewards in practice impede learning, requiring memory models to discard irrelevant past states. Thus, reward delaying and state aggregation offer complementary tests: the former evaluates forgetting, the latter retention.

We define RewardDelay$(k)$ wrappers: for any MDP trajectory, the reward at step $t-k$ is delayed to $t$, yielding an MDS of $\{t-k, t\}$. At the terminal step, all rewards from the last $k$ steps are collected.

\subsection{Basic and Wrapped MDP Environments}
We normalized classic MDP by fixing trajectory length and normalizing episode return.
For example, NormalizedCartPole has trajectory length $256$ and episode return range $[-1,1]$.

We also constructed special MDPs to be wrapped, such as RewardWhenInside, with observations independently sampled from $U(0,1)$ for $256$ steps. Agents receive a reward of $1$ for correctly identifying the interval (out of $8$) that the current state falls into, otherwise $0$.

We denote StateConv-wrapped environments as \_p$(k)$ or \_n$(k)$ (e.g., RewardWhenInside\_p$(1)$), and RewardDelay-wrapped environments as $(k)$ (e.g., NormalizedCartPole$(k)$).

\subsection{Environment Selection According to Real-world Memory Demand}
LinProcEnv (low/high order) for different order short-term dependence, StateConv for long-term dependence, and RewardDelay for evaluating forgetting.
TimeEq for non-stationarity, TrajEq for non-consistency.

\section{Experiments to Validate MDS-Guided POMDP Design}
\label{sec:experiments}
We take RL algorithm and memory model implementations from POPGym \cite{morad2023popgym} to validate our MDS-based framework and synthetic environment suites (introduced in Section \ref{sec:syntheticenvs}): increasing MDS complexity consistently and uniformly degrades convergence performance.
Each setting was run for training $10^6$ steps with $10$ random seeds and mean episode return with $95\%$ confidence intervals was reported.

Fig.~\ref{fig:alleqoneproc} shows that for both PPO \cite{schulman2017proximal} and DQN \cite{mnih2013playing}, increasing the order of autoregressive process consistently degrades the convergence performance across various memory models. In environments with linear transitions (Figs.~\ref{fig:alleqonelinproc}, \ref{fig:alleqonequadproc}), simpler RNN performs comparably to or marginally better than LSTM; whereas in nonlinear settings (particularly Fig.~\ref{fig:alleqonequadprocdqn}), LSTM exhibits a clear advantage.

Fig.~\ref{fig:alllinproc} shows that performance degrades with increasing order. For both PPO and DQN, non-consistency harms performance more than non-stationarity, likely because enforcing non-consistency via initial-state distinction introduces dependence on the initial observation, which increases the MDS complexity.

Fig.~\ref{fig:convenvs} shows that higher order StateConv introduce more interference from past observations, making state decoding harder. Performance degrades monotonically with level in RewardWhenInside, where states are compact and task-relevant. A similar but weaker trend appears in NormalizedCartPole. The spike at Level~5 in Fig.~\ref{fig:cartpole_n} suggests that using $s_t - s_{t-1}$ as observation facilitates learning more than $s_t - \mu s_{t-1}, \mu\in(0,1)$, likely due to representational redundancy in CartPole.
Thus, compact representation learning \cite{zhang2020learning,lan2023sample} facilitates adapting StateConv to complicated real-world environments.

Fig.~\ref{fig:rewarddelay} shows performance degrading with longer reward delays. As reward delays preserve the Markovian optimal policy, memory models do not outperform MLP. Thus, RewardDelay tests a model's ability to discard unnecessary information, complementing StateConv. In wrapped NormalizedCartPole (Fig.~\ref{fig:normalizedcartpole}), the monotonic decline disappears. This is possibly because CartPole, as a continuous control task, is inherently robust to reward delays. Under our setup, PPO with memory models still learns a near-optimal Markov policy.

\begin{figure}[h]
\captionsetup[subfigure]{
    font=scriptsize,
    justification=raggedright,
    skip=0pt,
    margin=0pt,
    aboveskip=0pt,
    belowskip=0pt
}
\centering
    \begin{subfigure}[b]{0.48\linewidth}
        \centering
        \includegraphics[width=\linewidth, keepaspectratio, height=0.6\linewidth]{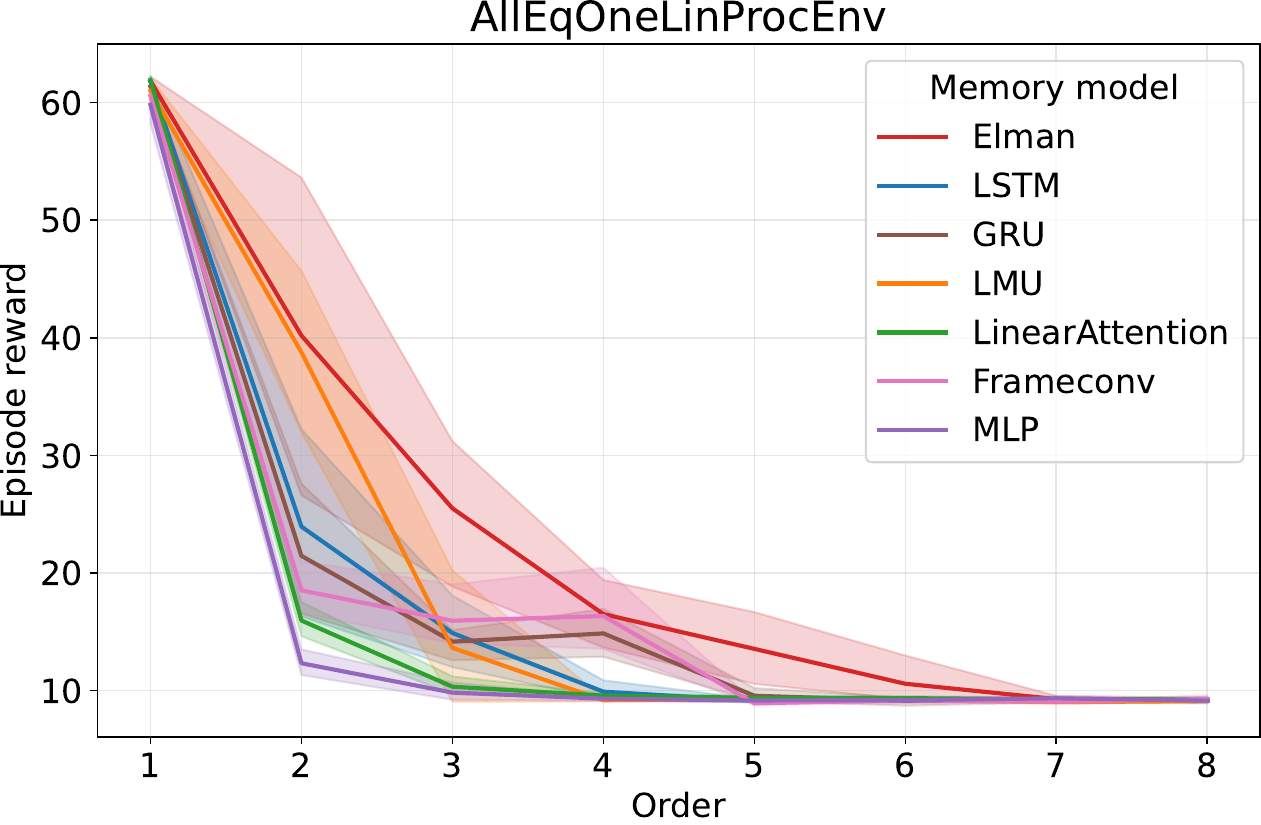}
        \caption{(PPO) AllEqOneLinProcEnv$(k)$}
        \label{fig:alleqonelinproc}
    \end{subfigure}
    \hfill
    \begin{subfigure}[b]{0.48\linewidth}
        \centering
        \includegraphics[width=\linewidth, keepaspectratio, height=0.6\linewidth]{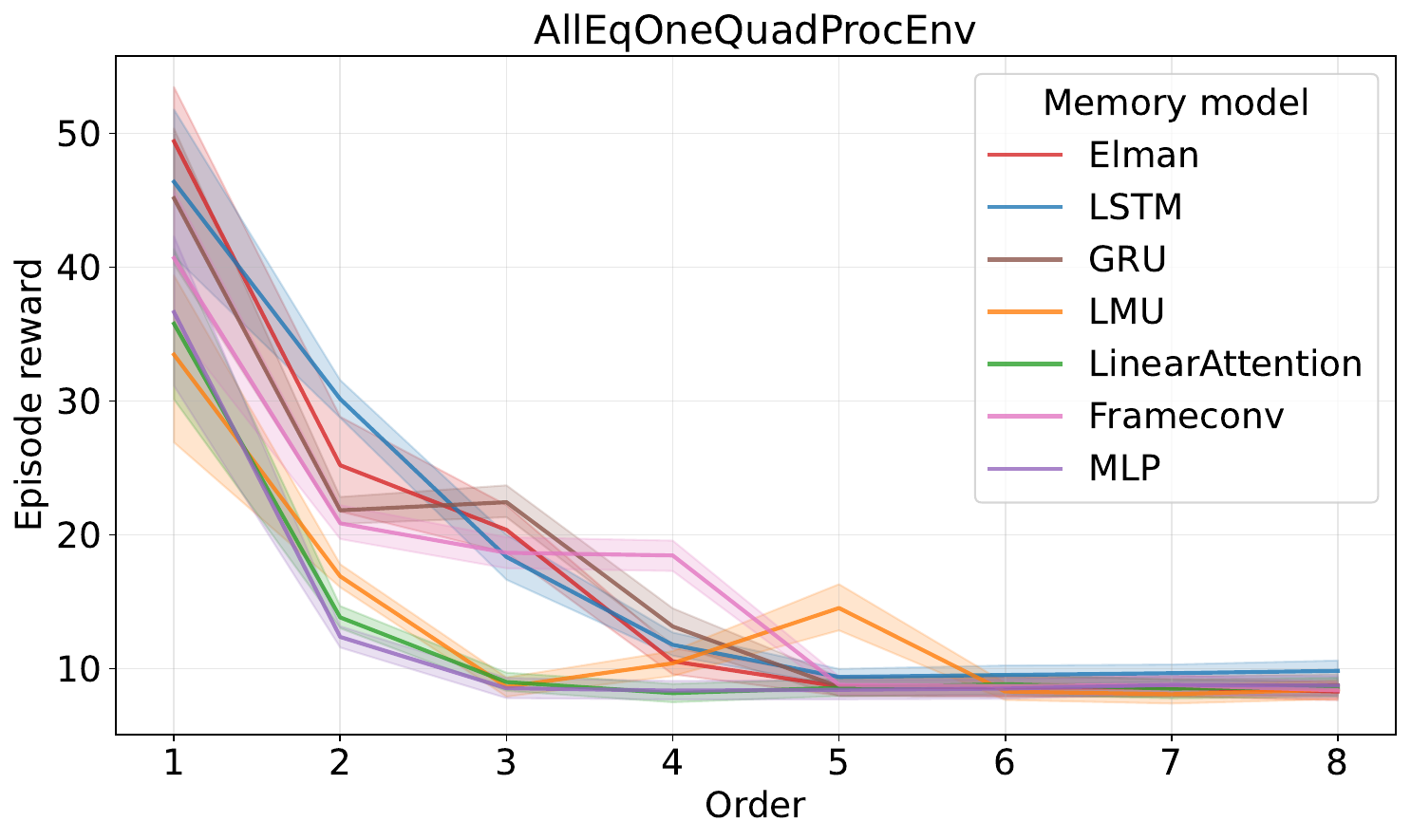}
        \caption{(PPO) AllEqOneQuadProcEnv$(k)$}
        \label{fig:alleqonequadproc}
    \end{subfigure}
    \vspace{0pt}
    \begin{subfigure}[b]{0.48\linewidth}
        \centering
        \includegraphics[width=\linewidth, keepaspectratio, height=0.6\linewidth]{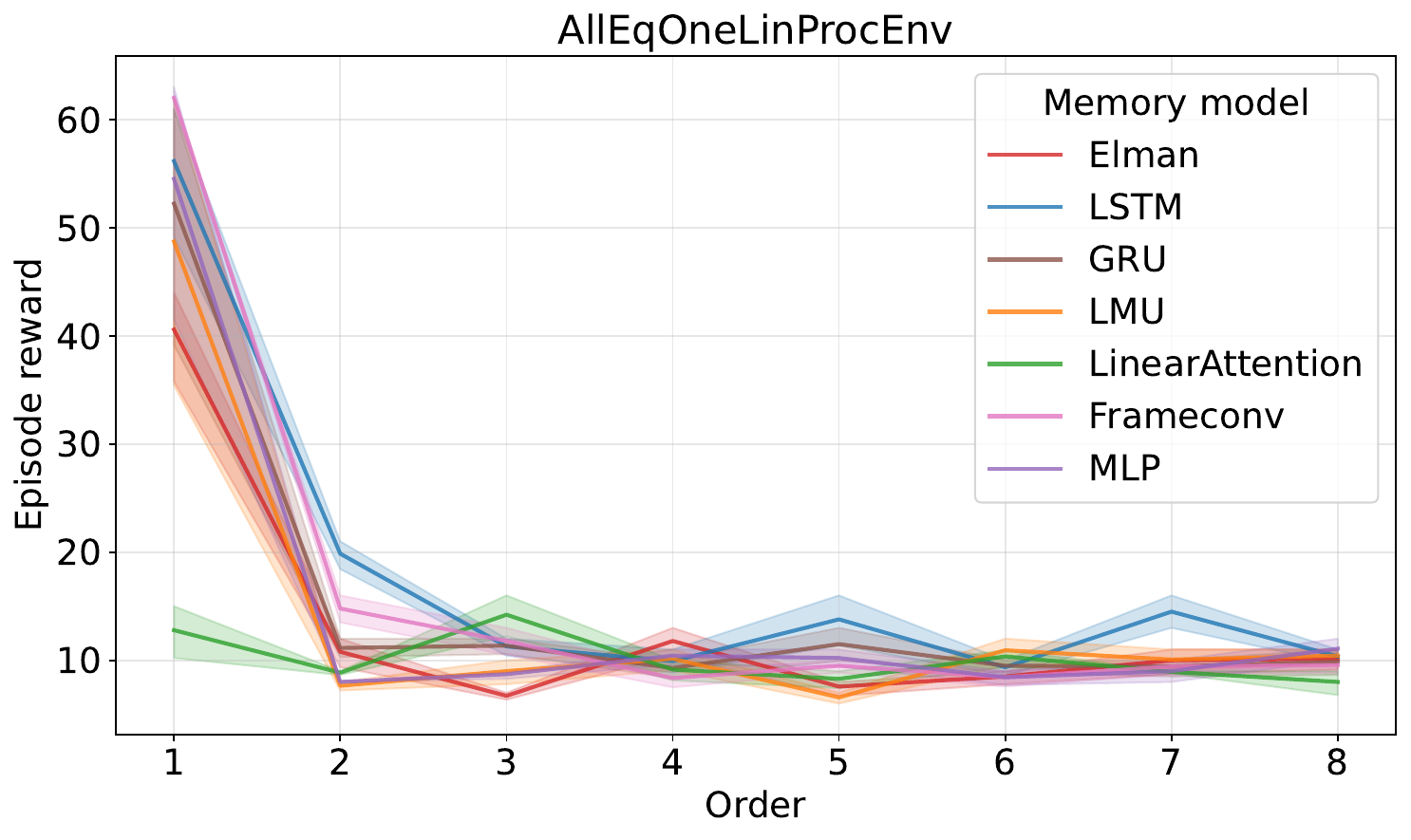}
        \caption{(DQN) AllEqOneLinProcEnv$(k)$}
        \label{fig:alleqonelinprocdqn}
    \end{subfigure}
    \hfill
    \begin{subfigure}[b]{0.48\linewidth}
        \centering
        \includegraphics[width=\linewidth, keepaspectratio, height=0.6\linewidth]{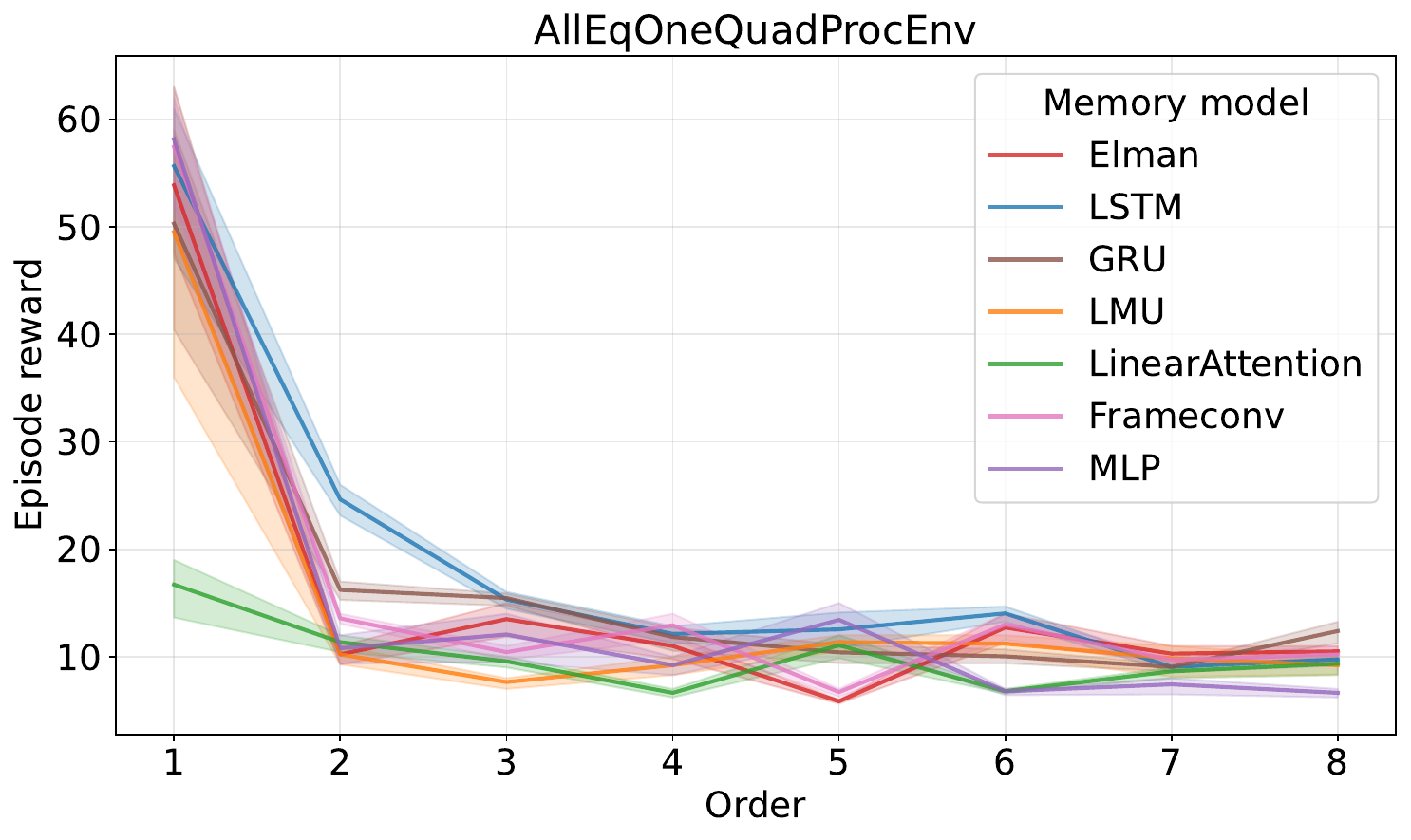}
        \caption{(DQN) AllEqOneQuadProcEnv$(k)$}
        \label{fig:alleqonequadprocdqn}
    \end{subfigure}
\caption{Linear/Non-linear autoregressive processes prediction tasks with all $1$ coefficients.}
\label{fig:alleqoneproc}
\end{figure}

\begin{figure}[h]
    \captionsetup[subfigure]{
        font=scriptsize,
        justification=raggedright,
        skip=0pt,
        margin=0pt,
        aboveskip=0pt,
        belowskip=0pt
    }
    \centering
    \begin{subfigure}[b]{0.48\linewidth}
        \centering
        \includegraphics[width=\linewidth, keepaspectratio, height=0.6\linewidth]{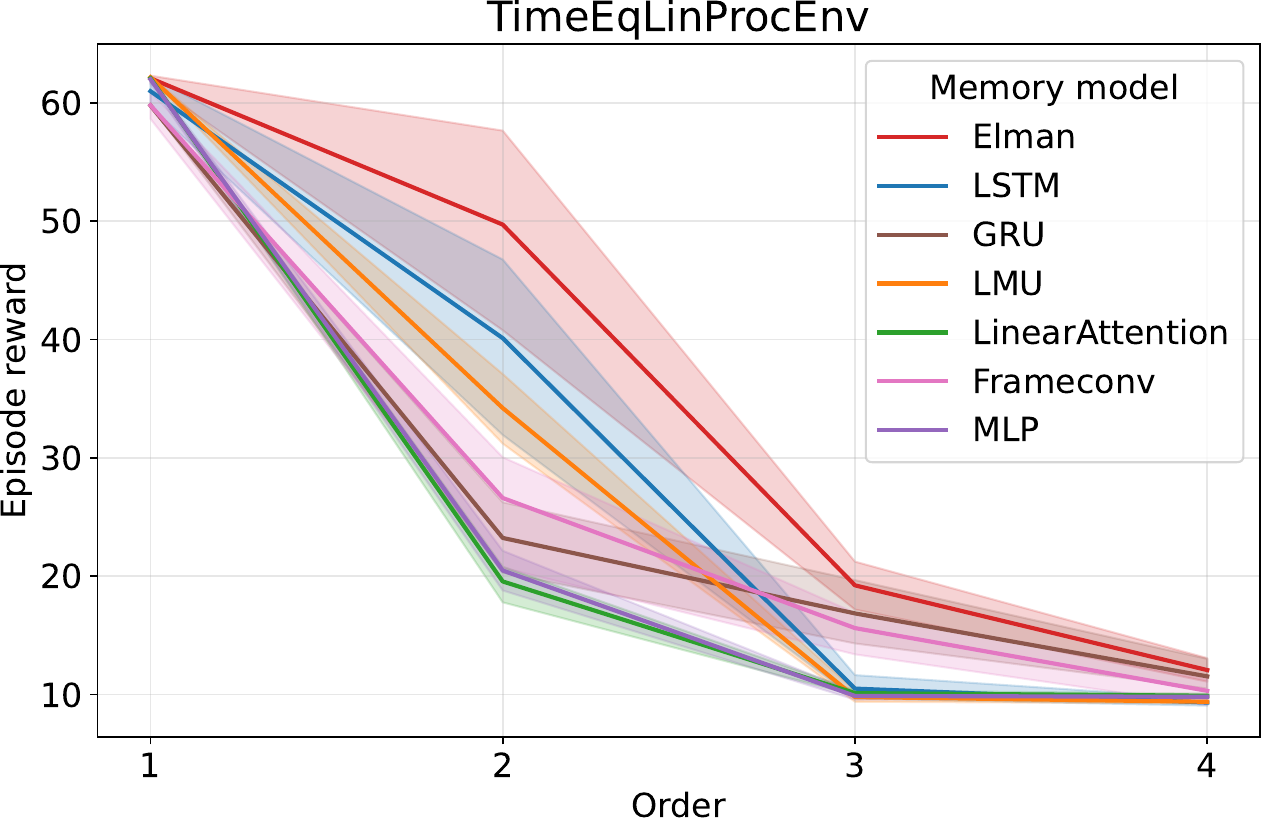}
        \caption{(PPO) Consistent, Non-stationary}
        \label{fig:timeeqproc}
    \end{subfigure}
    \vspace{0pt}
    \begin{subfigure}[b]{0.48\linewidth}
        \centering
        \includegraphics[width=\linewidth, keepaspectratio, height=0.6\linewidth]{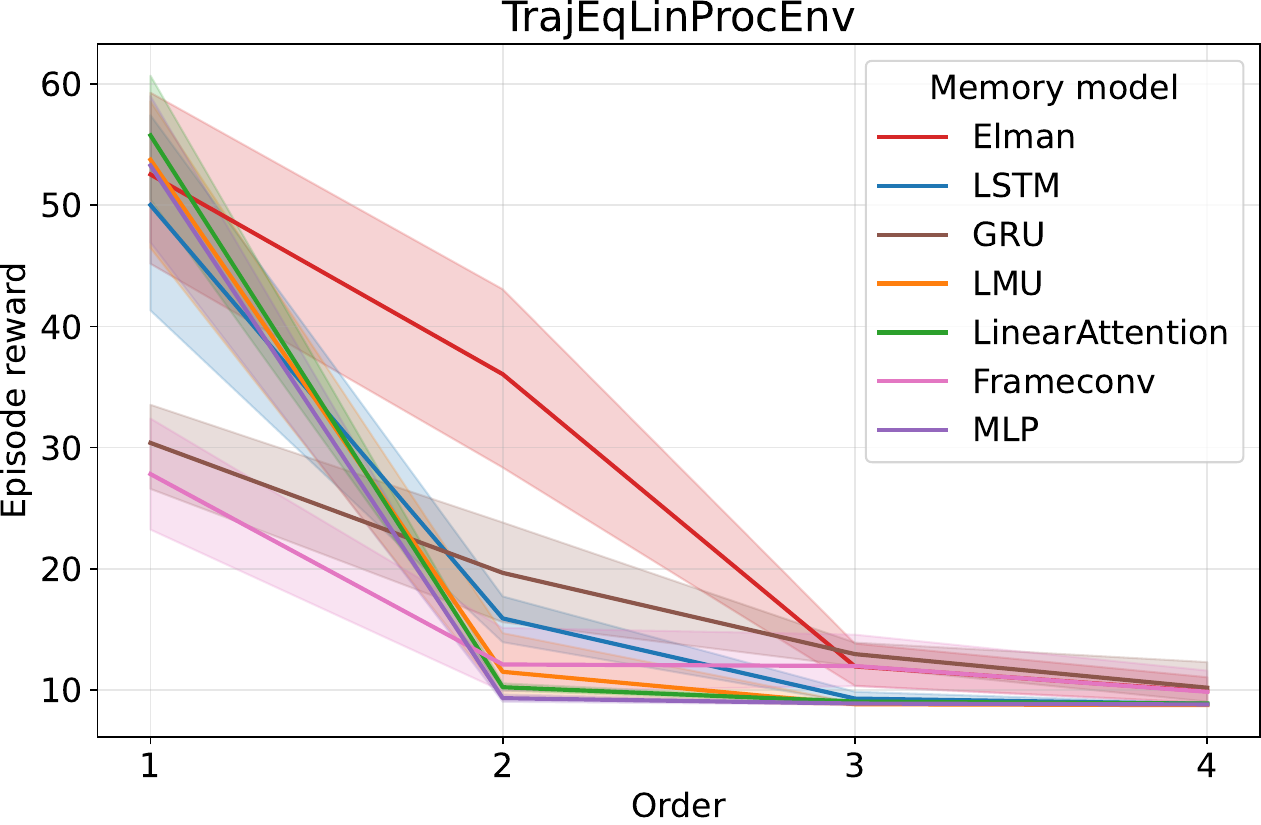}
        \caption{(PPO) Non-consistent, Stationary}
        \label{fig:trajeqproc}
    \end{subfigure}
    \hfill
    \begin{subfigure}[b]{0.48\linewidth}
        \centering
        \includegraphics[width=\linewidth, keepaspectratio, height=0.6\linewidth]{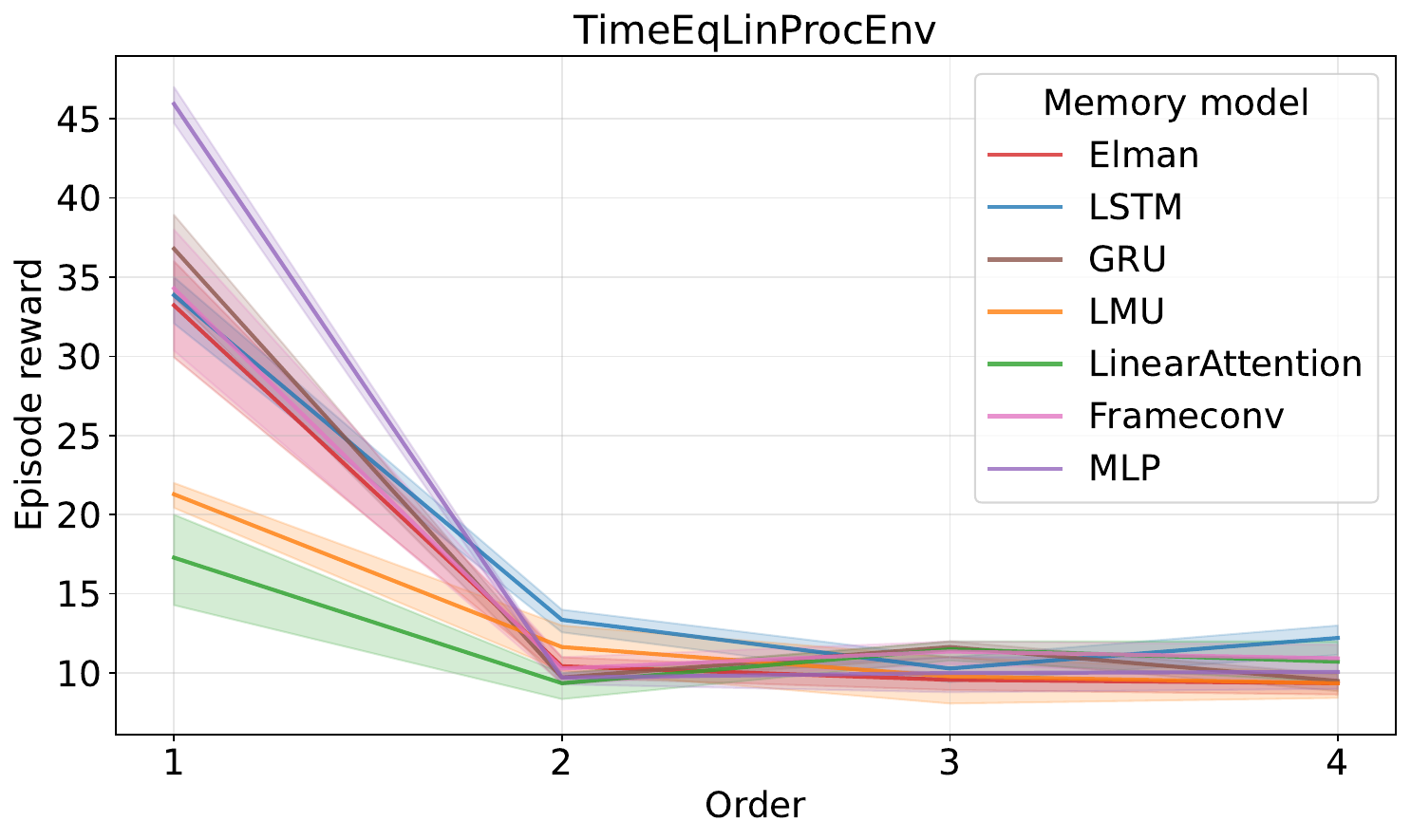}
        \caption{(DQN) Consistent, Non-stationary}
        \label{fig:timeeqprocdqn}
    \end{subfigure}
    \vspace{0pt}
    \begin{subfigure}[b]{0.48\linewidth}
        \centering
        \includegraphics[width=\linewidth, keepaspectratio, height=0.6\linewidth]{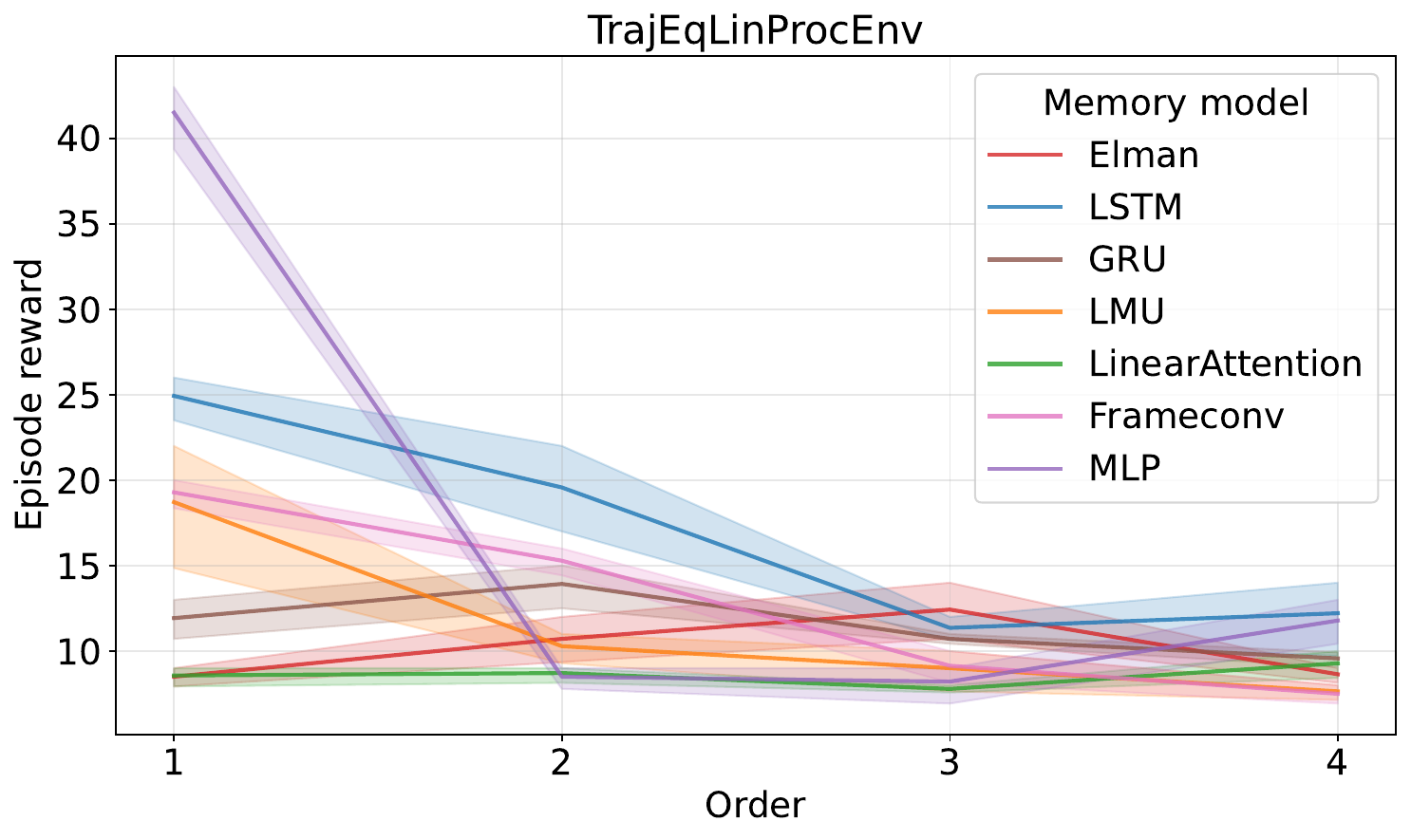}
        \caption{(DQN) Non-consistent, Stationary}
        \label{fig:trajeqprocdqn}
    \end{subfigure}
    \hfill
    \caption{Linear processes with different coefficients and transition invariances.}
    \label{fig:alllinproc}
\end{figure}

\begin{figure}[h]
    \captionsetup[subfigure]{
        font=scriptsize,
        justification=raggedright,
        skip=1pt,
        margin=0pt,
        aboveskip=0pt,
        belowskip=0pt
    }
    \centering
    \begin{subfigure}[b]{0.48\linewidth}
        \centering
        \includegraphics[width=\linewidth, keepaspectratio, height=0.6\linewidth]{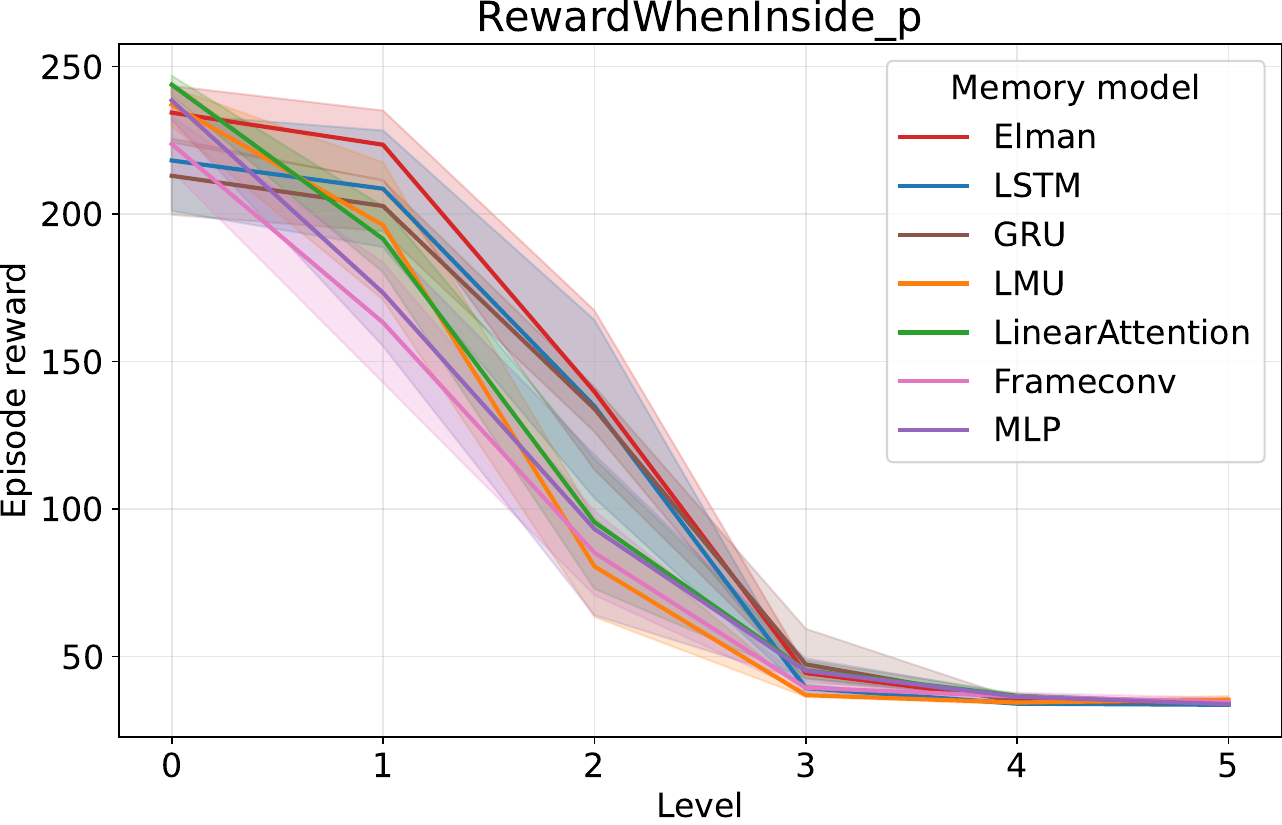}
        \caption{RewardWhenInside\_p}
        \label{fig:r_p}
    \end{subfigure}
    \hfill
    \begin{subfigure}[b]{0.48\linewidth}
        \centering
        \includegraphics[width=\linewidth, keepaspectratio, height=0.6\linewidth]{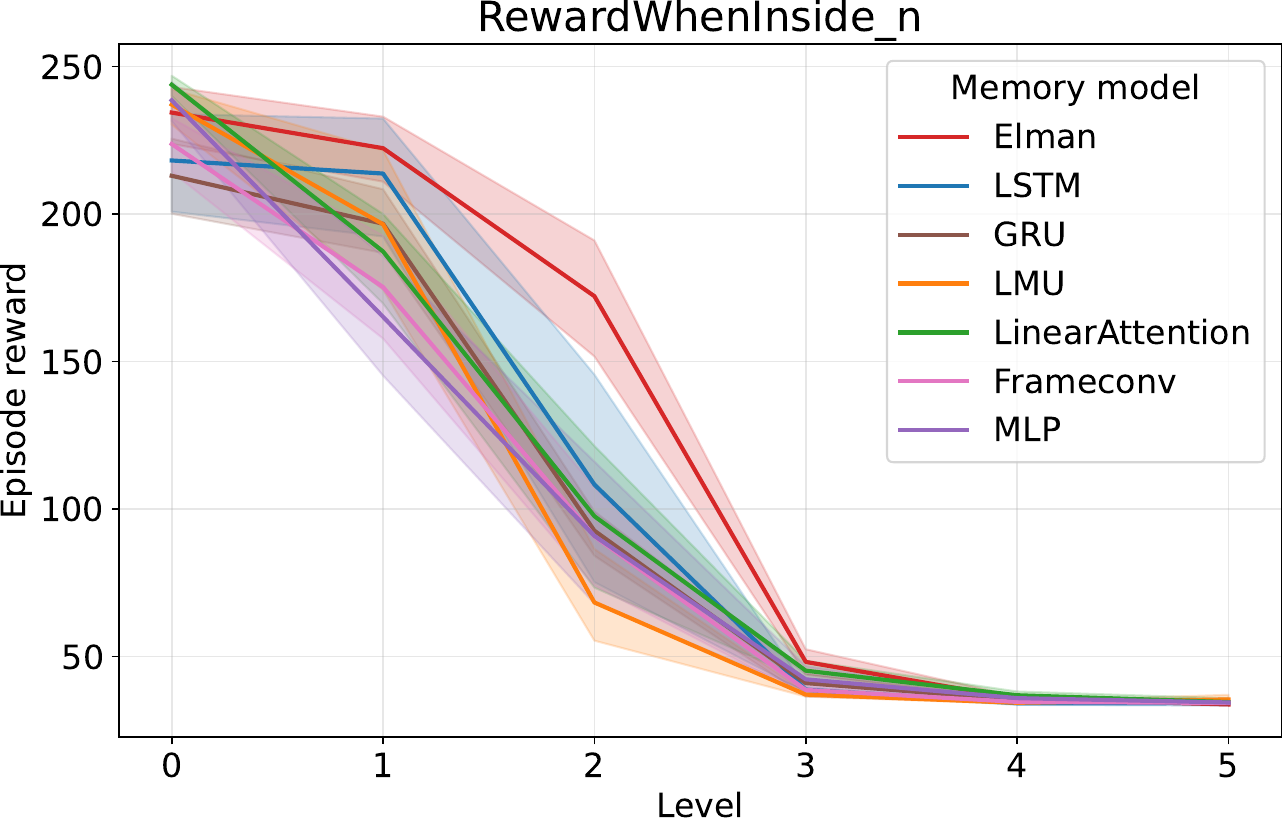}
        \caption{RewardWhenInside\_n}
        \label{fig:r_n}
    \end{subfigure}
    \vspace{0pt}
    \begin{subfigure}[b]{0.48\linewidth}
        \centering
        \includegraphics[width=\linewidth, keepaspectratio, height=0.6\linewidth, height=0.6\linewidth]{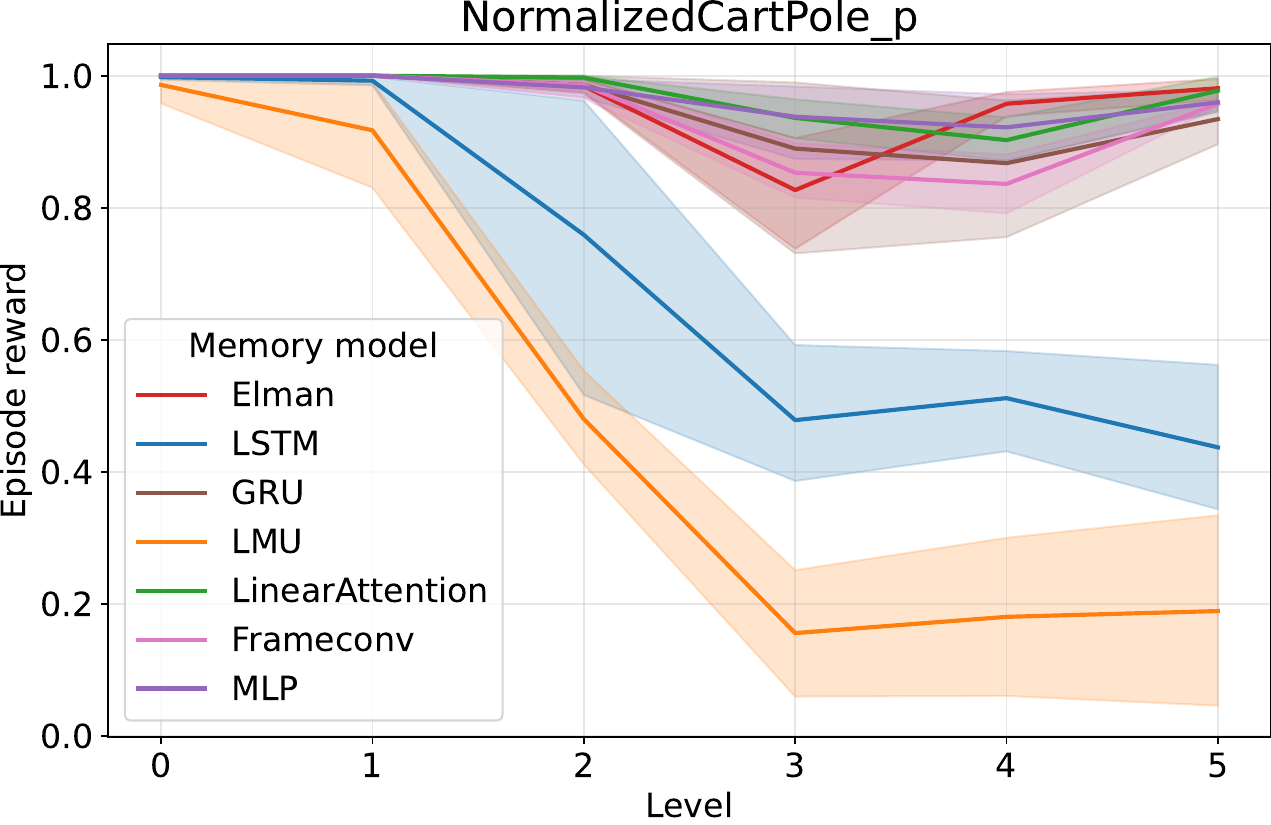}
        \caption{NormalizedCartPole\_p}
        \label{fig:cartpole_p}
    \end{subfigure}
    \hfill
    \begin{subfigure}[b]{0.48\linewidth}
        \centering
        \includegraphics[width=\linewidth, keepaspectratio, height=0.6\linewidth]{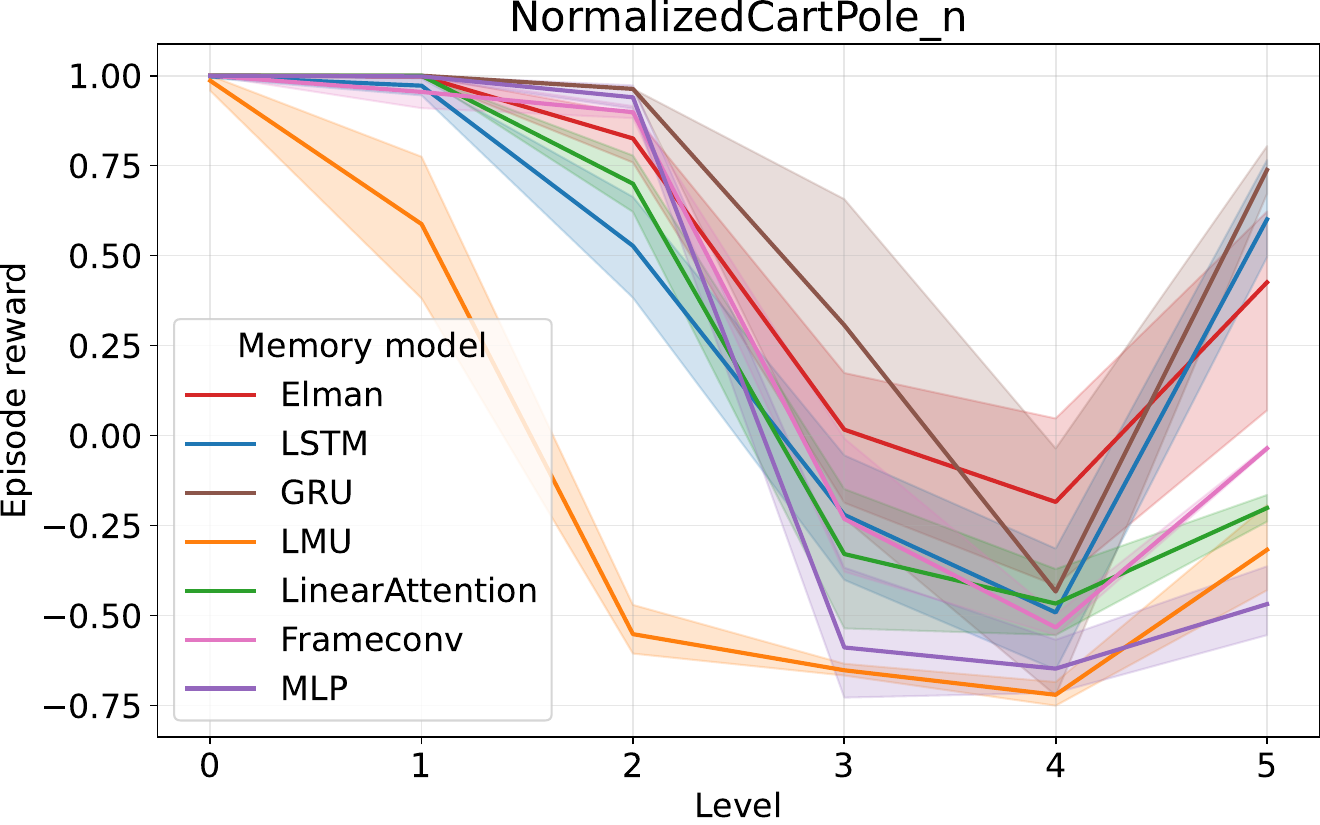}
        \caption{NormalizedCartPole\_n}
        \label{fig:cartpole_n}
    \end{subfigure}
    \vspace{0pt}
    \caption{(PPO) StateConv series with increasing coefficient $w_1$.}
    \label{fig:convenvs}
\end{figure}

\begin{figure}[h]
    \captionsetup[subfigure]{
        font=scriptsize,
        justification=raggedright,
        skip=1pt,
        margin=0pt,
        aboveskip=0pt,
        belowskip=0pt
    }
    \centering
    \begin{subfigure}[b]{0.48\linewidth}
        \centering
        \includegraphics[width=\columnwidth]{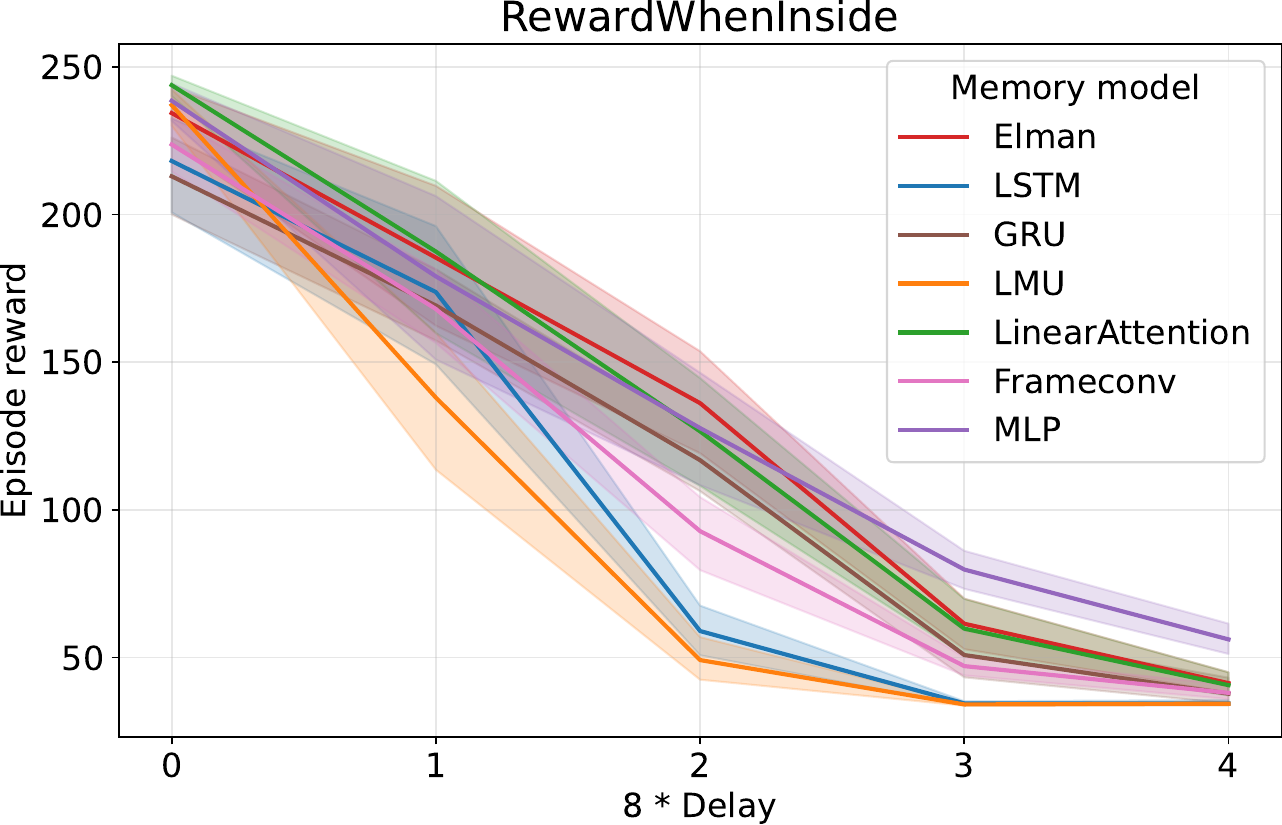}
    \caption{RewardWhenInside}
        \label{fig:rewarddelay}
    \end{subfigure}
    \hfill
    \begin{subfigure}[b]{0.48\columnwidth}
        \includegraphics[width=\columnwidth]{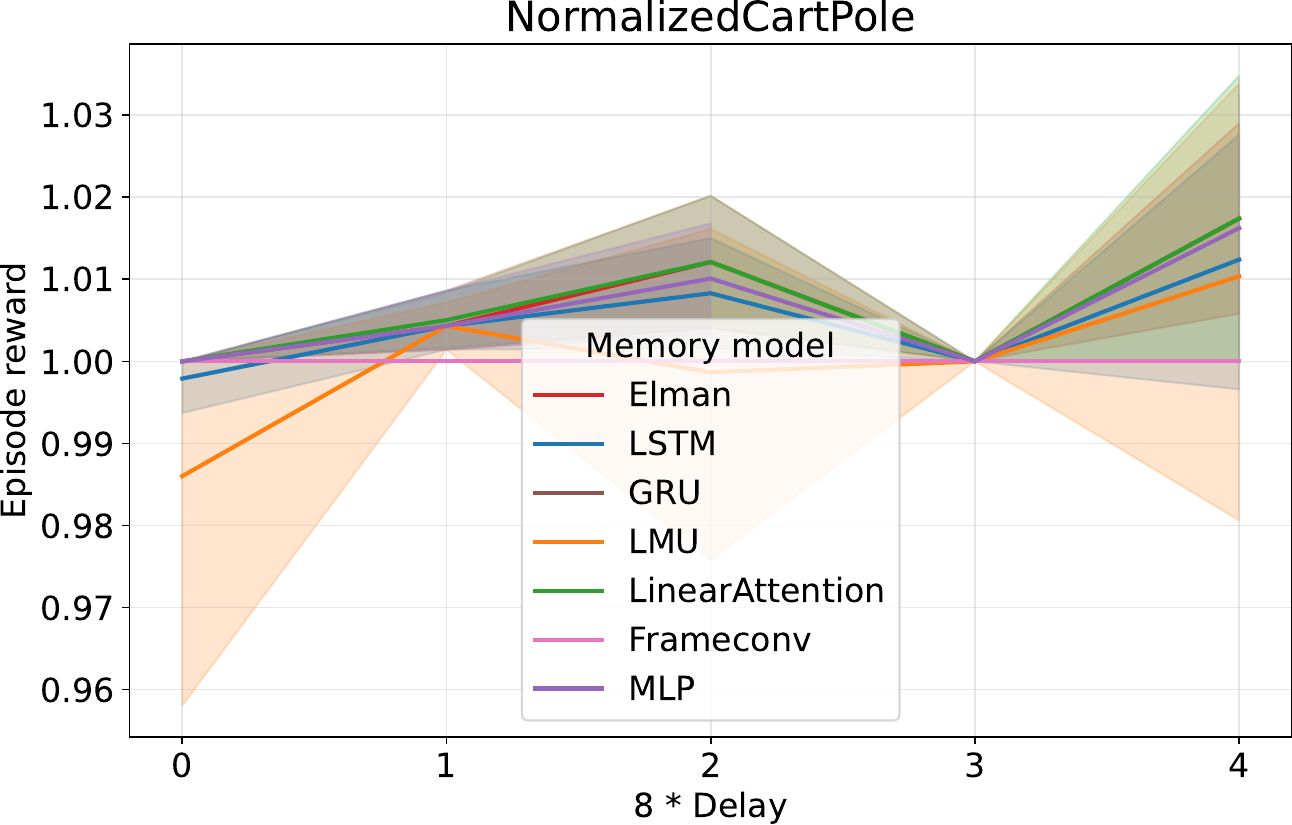}
        \caption{NormalizedCartPole}
        \label{fig:normalizedcartpole}
    \end{subfigure}
    \caption{(PPO) RewardDelay series with increasing delays.}
    \label{fig:rd}
\end{figure}

\section*{Competing interests}
The authors declare that they have no competing interests or financial
conflicts to disclose.

\bibliographystyle{fcs}
\bibliography{ref}

@inproceedings{tao2025benchmarking,
  title={Benchmarking Partial Observability in Reinforcement Learning with a Suite of Memory-Improvable Domains},
  author={Tao, Ruo Yu and Guo, Kaicheng and Allen, Cameron and Konidaris, George},
  booktitle={Reinforcement Learning Conference},
  year={2025}
}

@inproceedings{morad2023popgym,
  title={POPGym: Benchmarking Partially Observable Reinforcement Learning},
  author={Morad, Steven and Kortvelesy, Ryan and Bettini, Matteo and Liwicki, Stephan and Prorok, Amanda},
  booktitle={The Eleventh International Conference on Learning Representations},
  year={2023}
}

@inproceedings{hafner2021benchmarking,
  title={Benchmarking the Spectrum of Agent Capabilities},
  author={Hafner, Danijar},
  booktitle={International Conference on Learning Representations},
  year={2022}
}

@inproceedings{cobbe2020leveraging,
  title={Leveraging Procedural Generation to Benchmark Reinforcement Learning},
  author={Cobbe, Karl and Hesse, Chris and Hilton, Jacob and Schulman, John},
  booktitle={International Conference on Machine Learning},
  pages={2048--2056},
  year={2020},
  organization={PMLR}
}

@article{schulman2017proximal,
  title={Proximal Policy Optimization Algorithms},
  author={Schulman, John and Wolski, Filip and Dhariwal, Prafulla and Radford, Alec and Klimov, Oleg},
  journal={arXiv preprint arXiv:1707.06347},
  year={2017}
}

@article{hochreiter1997long,
  title={Long Short-Term Memory},
  author={Hochreiter, Sepp and Schmidhuber, J{\"u}rgen},
  journal={Neural Computation},
  volume={9},
  number={8},
  pages={1735--1780},
  year={1997},
  publisher={MIT press}
}

@article{mnih2013playing,
  title={Playing atari with deep reinforcement learning},
  author={Mnih, Volodymyr and Kavukcuoglu, Koray and Silver, David and Graves, Alex and Antonoglou, Ioannis and Wierstra, Daan and Riedmiller, Martin},
  journal={arXiv preprint arXiv:1312.5602},
  year={2013}
}

@article{shchendrigin2026memory,
  title={Memory Retention Is Not Enough to Master Memory Tasks in Reinforcement Learning},
  author={Shchendrigin, Oleg and Cherepanov, Egor and Kovalev, Alexey K and Panov, Aleksandr I},
  journal={arXiv preprint arXiv:2601.15086},
  year={2026}
}

@article{lan2023sample,
  title={Sample efficient deep reinforcement learning with online state abstraction and causal transformer model prediction},
  author={Lan, Yixing and Xu, Xin and Fang, Qiang and Hao, Jianye},
  journal={IEEE Transactions on Neural Networks and Learning Systems},
  volume={35},
  number={11},
  pages={16574--16588},
  year={2023},
  publisher={IEEE}
}

@inproceedings{zhang2020learning,
  title={Learning Invariant Representations for Reinforcement Learning without Reconstruction},
  author={Zhang, Amy and McAllister, Rowan Thomas and Calandra, Roberto and Gal, Yarin and Levine, Sergey},
  booktitle={International Conference on Learning Representations},
  year={2021}
}

\end{document}

% --- supplement: appendix.tex ---

\maketitle
\appendix

\section{POMDP's Observation Transition Function}
\label{app:proofeqpomdp}
In our work, we directly construct some transition functions between observations, which deviates from the standard POMDP formulation (where transitions occur between states, and observations are generated from unobserved actions via an observation function). Such observation-to-observation transitions can always be reduced to a POMDP by treating the entire history as a hidden state. However, whether an arbitrary POMDP can be equivalently expressed using observation-based transition functions remains to be proven.

Below, we show the way to express the transition dynamics of an arbitrary POMDP as observation-to-observation transition functions. Based on this conclusion, and with a slight abuse of notation, we will make more use of observation-based transition functions rather than state-based ones in the following text.

\begin{theorem}[\textbf{POMDP's Observation Transition Function}]\label{thm:eqpomdp}
For any POMDP $\pomdp=\pomdpdef$, it's observation transition function is:
\begin{equation*}
T_t^\prime(z_{t+1},r_t | z_{0:t},a_{0:t}) = \frac{\sum\limits_{s_{0:t+1}\in S^{t+2}}T_t(s_{t+1},r_t|s_t,a_t)O_{t+1}(z_{t+1}|s_{t+1})\Phi_t(z_{0:t}|s_{0:t},a_{0:t})}{\sum\limits_{s_{0:t}\in S^{t+1}}\Phi_t(z_{0:t}|s_{0:t},a_{0:t})}
\end{equation*} 
where \begin{equation*}
\Phi_t(z_{0:t}|s_{0:t},a_{0:t})=\rho_0(s_0)O_0(z_0|s_0)\prod_{\tau=0}^{t-1}O_{\tau+1}(z_{\tau+1}|s_{\tau+1})\sum_{r_\tau\in\real}T_\tau(s_{\tau+1},r_\tau|s_{\tau},a_{\tau})
\end{equation*}
\end{theorem}
\begin{proof}
\begin{flalign*}
& T_t^\prime(z_{t+1},r_t | z_{0:t}, a_{0:t}) \\
& = \frac{\pr(Z_{0:t+1}=z_{0:t+1}, R_t=r_t | A_{0:t}=a_{0:t})}{\pr(Z_{0:t}=z_{0:t}|A_{0:t}=a_{0:t})} \\
& = \frac{\sum\limits_{s_{0:t+1}\in S^{t+2}}\pr(s_0)\pr(z_0|s_0)\pr(s_{t+1},r_t|s_t,a_t)\pr(z_{t+1}|s_{t+1})\prod_{\tau=0}^{t-1}\pr(s_{\tau+1}|s_{\tau},a_{\tau})\pr(z_\tau|s_\tau)}{\sum\limits_{s_{0:t}\in S^{t+1}}\pr(s_0)\pr(z_0|s_0)\prod_{\tau=0}^{t-1}\pr({s_{\tau+1}|s_{\tau},a_{\tau})\pr(z_\tau|s_\tau)}} \\
& = \frac{\sum\limits_{s_{0:t+1}\in S^{t+2}}\rho_0(s_0)O_0(z_0|s_0)T_t(s_{t+1},r_t|s_t,a_t)O_{t+1}(z_{t+1}|s_{t+1})\prod_{\tau=0}^{t-1}O_{\tau+1}(z_{\tau+1}|s_{\tau+1})\sum\limits_{r_\tau\in\real}T_\tau(s_{\tau+1},r_\tau|s_{\tau},a_{\tau})}{\sum\limits_{s_{0:t}\in S^{t+1}}\rho_0(s_0)O_0(z_0|s_0)\prod_{\tau=0}^{t-1}O_{\tau+1}(z_{\tau+1}|s_{\tau+1})\sum\limits_{r_\tau\in\real}T_\tau(s_{\tau+1},r_\tau|s_{\tau},a_{\tau})} \\
& = \frac{\sum\limits_{s_{0:t+1}\in S^{t+2}}T_t(s_{t+1},r_t|s_t,a_t)O_{t+1}(z_{t+1}|s_{t+1})\Phi_t(z_{0:t}|s_{0:t},a_{0:t})}{\sum\limits_{s_{0:t}\in S^{t+1}}\Phi_t(z_{0:t}|s_{0:t},a_{0:t})} 
\end{flalign*}
\end{proof}

\section{Futher Discussions of Memory Demand Structure (MDS)}
\label{app:fdmds}
\subsection{More Refined Characterization of MDS}
\label{app:mdsdefandextend}
\begin{definition}[\textbf{POMDP MDS}] For any POMDP $\pomdp$'s trajectory $h_t=(z_{0:t},a_{0:t},r_{0:t-1})$, a set $D\subseteq\{0,1,\cdots,t\}$ is called $h_t$'s \textbf{MDS} if $\forall z_{t+1}\in Z,~\forall r_t\in\real$, \label{def:mds}
$$\pr(z_{t+1},r_t|z_{0:t},a_{0:t})\!=\!\pr(z_{t+1},r_t|\bigcap_{\tau\in D}\{Z_\tau\!=\!z_\tau, A_\tau\!=\!a_\tau\})$$
\end{definition}
We denote $\mathbf{D}(h_t)$, the set of all MDSs of $h_t$; $D(h_t)\coloneqq \{(z_\tau,a_\tau)\}_{\tau\in D}$, which is a sufficient statistic; and $\tilde{D}(h)\coloneqq \bigcap_{\tau\in D}\{Z_\tau=z_\tau, A_\tau=a_\tau\}$, a sufficient condition to estimate the next observation $z_{t+1}$ and reward $r_t$. \\

\textbf{Note}: The results below are self-evident.
\begin{itemize}
\item MDP is a special case of POMDP, where $Z=S, \ O_t(Z_t=s_t|S_t=s_t)=1$.
\item For any trajectory $h_t$ of a POMDP, we have $\{0,1,\dots,t\}\in\mathbf{D}(h_t)$.
\item For any trajectory $h_t$ of an MDP, we have $\{t\}\in\mathbf{D}(h_t)$.
\item For any $D\in\mathbf{D}(h_t)$ and $X\subseteq\{0,1,\dots,t\}$, we have $D\cup X\in\mathbf{D}(h_t)$.
\end{itemize}

For a trajectory $h_t$ of an POMDP, $D(h_t)$ is defined to capture the relationship between the next observation-reward pair and past observation-action pairs.
This relationship can be decomposed into four components: the relation between the next observation and past observations, between the next observation and past actions, between the next reward and past observations, and between the next reward and past actions.
This decomposition can be achieved by taking the marginal distributions of $z_{t+1}$ and $r_t$ from $T_t(z_{t+1},r_t|z_{0:t},a_{0:t})$ along with the technique of separating the observation and action components in $\tilde{D}(h_t)$.

Such a discussion enables a more refined characterization of MDS.
When we focus on the associations between the next-step observation-reward pair and previous observations within the MDS, our emphasis lies in the \textbf{memory} of the past by the memory model, specifically, which observations provided by the environment have served as clues in the process of reaching the current status.
Conversely, when our focus shifts to the associations between the next-step observation-reward pair and previous actions within the MDS, our emphasis turns to the \textbf{exploration} behavior of the RL algorithm: that is, what sequence of actions is required to reach the current status. 

In the other dimension, we can define Markov/non-Markov state transition dynamics when focusing solely on the relationship between next observation and past observation-action pairs.
Conversely, we can also define Markov/non-Markov rewards when concentrating only on the relationship between the next reward and prior observation-action pairs.

Consequently, POMDPs can be categorized into: Markov-transition-Markov-reward, Markov-transition-non-Markov-reward, Non-Markov-transition-Markov-reward, and Non-Markov-transition-non-Markov-reward.

\subsection{Example of an MDP Trajectory with Multiple MDS}
\label{app:egmdp}
\begin{example}
\begin{figure}[h]
\centering
\includegraphics[width=0.5\columnwidth]{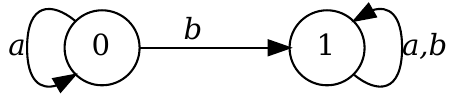}
\caption{A deterministic MDP with all $0$ reward: $S=\{0,1\},~A=\{a,b\},~\rho_0(0)=1$}
\label{fig:multiplemds}
\end{figure}
In MDP Fig. \ref{fig:multiplemds} (as a special case of POMDP), given the trajectory $h_2=(011,bbb,00)$, then $\mathbf{D}(h_2)=2^{\{0,1,2\}}$.
\end{example}
\begin{proof}
Clearly, the union of multiple MDSs of the same trajectory remains an MDS of that trajectory. Therefore, we only need to show that $\{0\}, \{1\}, \{2\} \in \mathbf{D}(h_2)$. 

Since given $s_2=1$ and $1$ is an absorbing state, it is obvious that $s_3$ will always be $1$, $\{2\} \in \mathbf{D}(h_2)$. 

Given that $s_1=1$, then $s_2=1$ and $s_3=1$ can be determined sequentially, making $\{1\}\in \mathbf{D}(h_2)$. 

Moreover, since $s_0=0$ and $a_0=b$ imply $s_1=1$, then $\{0\}\in\mathbf{D}(h_2)$ as well. 
\end{proof}
\subsection{MDS of Example POMDP with Visualization}
\begin{center}
\begin{figure}[h]
\label{app:explainmds}
\centering
\includegraphics[width=0.8\columnwidth]{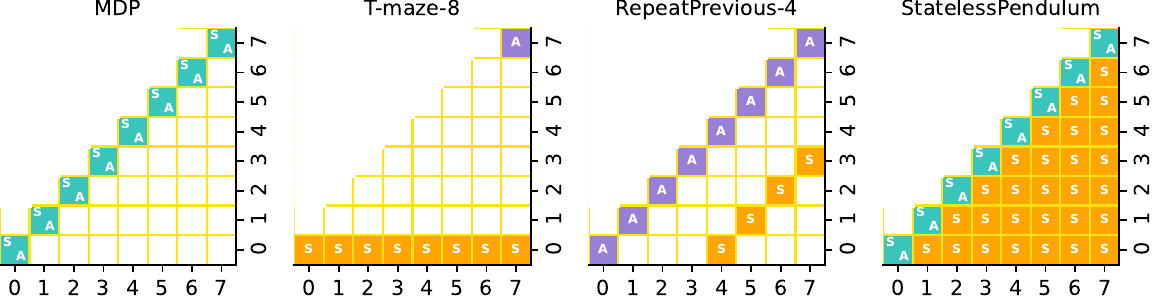}
\caption{MDS visualizations. A colored block $(t,i)$ indicates that for any trajectory $h_t$, $i\in D\in\mathbf{D}(h_t)$. Different colors denote distinct dependence relationships: state-only, action-only, and both.}
\label{fig:egmds}
\end{figure}
\end{center}

\textbf{MDP}: In a general \textit{MDP}, the next state-reward joint distribution depends solely on the current state and action.

\textbf{T-Maze-8}: consists of an 8-length one-way channel with a T-junction at one end. Agents start from the end without the T-junction and receive instructions on the correct direction at the outset. To gain reward (the sole means of doing so), they must remember this direction while traversing the channel and select it accurately upon reaching the T-junction.
Under this mechanism, each observation encountered while traversing the tunnel carries information from the initial instruction (information embedded in the initial observation), and whether a reward is obtained at the 8th step depends on the action chosen at that final moment.

\textbf{RepeatPrevious-4}: samples observations via an iid process. Agents must repeat the observation observed 4 time-steps prior.
Under this mechanism, the subsequent reward hinges on both the current action and the observation from 4 steps ago.

\textbf{StatelessPendulum}: The classical \textit{Pendulum} can be modeled as an MDP, where the angular position and angular velocity define its observation.
The \textit{StatelessPendulum} variant removes the angular position from the original \textit{Pendulum}'s observation representation.
As a result, agents retain a memory of all historical observations of angular velocities to reconstruct the original angular position through aggregation.

\subsection{Further Discussions of Memory Complexity Polynomial (MCP)}
\label{app:mcp}
\begin{definition}[\textbf{MCP}]
The MCP of a POMDP $\pomdp$ trajectory $h_t$ is
\begin{equation}d_{h_t}(x)\coloneqq\frac{1}{|D|!}\!\!\sum_{\sigma}\!\sum_{\tau\in D}I((z,a)_\tau;(z_{t+1},r_t)|(z,a)_{\{\tau^\prime:\sigma(\tau^\prime)<\sigma(\tau)\}})x^{t-\tau}\label{eq:mcpdef}\end{equation}
where $I(\cdot)$ is mutual information, and $\sigma$ traverse $D$'s all permutations.
\end{definition}
The MCP is a polynomial designed to model the memory demand imposed on memory models by a specific trajectory $h_t$ in a POMDP. 
The coefficient of $x^k$ quantifies the average amount of information that the observation-action pair $(z_{t-k},a_{t-k})$ at time step $t-k$ provides for predicting the next observation transition $(z_{t+1},r_t)$.

The term "average amount of information" is adopted rather than simply "amount of information" to account for variations in coefficient assignments that may arise from different sequential decompositions of mutual information, as illustrated in the following example.
\begin{figure}[h]
\centering
\includegraphics[width=0.6\columnwidth]{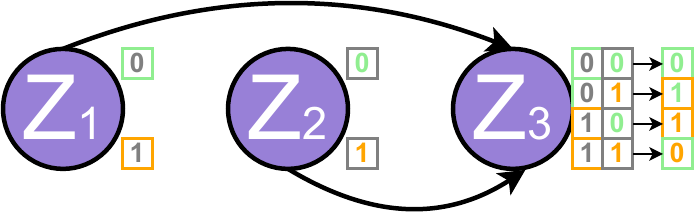}
\caption{ Two random coins determine the third random variable.}\label{fig:coin}
\end{figure}
\begin{example}
Consider two coins. Let independent random variables $Z_1$ and $Z_2$ (Fig. \ref{fig:coin}) represent the outcomes of random tosses of these two coins, respectively, where $0$ and $1$ denote heads and tails. Define the random variable $Z_3 \coloneqq (Z_1 + Z_2) \mod 2$. 
Then, the mutual information $I(Z_3;Z_1) = I(Z_3;Z_2) = 0$ and $I(Z_3;Z_2|Z_1)=I(Z_3;Z_1|Z_2)=1$.
\label{eg:coin}
\end{example}
\begin{proof}
$I(Z_3;Z_1)=H(Z_3)+H(Z_1)-H(Z_3,Z_1)=1+1-2=0$.
Similarly, $I(Z_3,Z_2)=0$. \\
Since $I(Z_3;Z_2,Z_1)=H(Z_3)+H(Z_2,Z_1)-H(Z_3,Z_2,Z_1)=1+2-2=1$,
by the chain rule of mutual information, $I(Z_3;Z_2,Z_1)=I(Z_3;Z_1)+I(Z_3;Z_2|Z_1)=I(Z_3;Z_2|Z_1)$, we have $I(Z_3;Z_2|Z_1)=1$.
Similarly, $I(Z_3;Z_1|Z_2)=1$.
\end{proof}
Consider Eg. \ref{eg:coin}, we have $$\underbrace{I(Z_3;Z_2,Z_1)}_{1\text{ bit}}=\underbrace{I(Z_3;Z_2)}_{0\text{ bit, coefficient of } x^0}+\underbrace{I(Z_3;Z_1|Z_2)}_{1\text{ bit, coefficient of 
 } x^1}=\underbrace{I(Z_3;Z_1)}_{0\text{ bit, coefficient of } x^1}+\underbrace{I(Z_3;Z_2|Z_1)}_{1\text{ bit, coefficient of }x^0}$$
Therefore, if we consider removing the averaging over $\sum_\sigma$ in Eq. \ref{eq:mcpdef}, for some decomposition order $\sigma$, it becomes:
\begin{equation*}
d_{h_t}(x)=\sum_{\tau\in D}I((z,a)_\tau;(z_{t+1},r_t)|(z,a)_{{\tau^\prime:\sigma(\tau^\prime)<\sigma(\tau)}})x^{t-\tau}
\end{equation*}

However, when applying the chain rule for mutual information $I((z_{t+1},r_t);(z,a)_{\tau\in D})$ with different orders of decomposition $\sigma$, the resulting coefficients are not necessarily identical. This is precisely why the definition requires averaging over all permutations $\sigma$ of the elements in set $D$.
After taking the average, the resulting definition becomes independent of the ordering in the chain rule decomposition of mutual information, thus effectively capturing the amount of information that a observation-action pair at a given moment can provide for predicting the next transition. Taking the average over all permutations in this way is precisely analogous to the calculation of the Shapley value in cooperative games.
\subsection{MCP Example}
\label{app:mcpeg}
The definition of MCP provides a way to compute, for a single trajectory (viewed as a sequence of random events), the contribution of information at each time step to the one-step transition. When analyzing the entire POMDP, it is necessary to take the expectation of the MCP over trajectories with respect to some probability measure on the set of trajectories, where the uniform distribution is chosen as the default.
\\\\
\textbf{T-maze-8}: Given that every step is required to retain $1$ bit of information ($2$ actions, Up or Down, $\log 2=1$ bit) originating from time step $0$, as demonstrated in Fig. \ref{fig:egmds}, the MCP is:
$$d_\text{\textit{T-maze-8}}(x)=(1+x+x^2+x^3+x^4+x^5+x^6+\frac{1+x^7}{2!})/8=\frac{1}{8}[\frac{1}{2}(x^7+1)+\sum_{\tau=0}^6x^\tau]$$
\textbf{RepeatPrevious-4}: Similar to the \textit{T-maze-8} environment, correctly reproducing an observation from $4$ steps earlier requires the agent to retain information amounting to $\log|Z|$, the logarithm of the observation space size. The MCP is thus given by:
$$d_\text{\textit{RepeatPrevious-4}}(x)=(\log|Z|)\times(1+1+1+1+\frac{1+x^4}{2!}+\frac{1+x^4}{2!}+\frac{1+x^4}{2!}+\frac{1+x^4}{2!})/8=\frac{1}{4}(x^4+3)\log|Z|$$
\textbf{AllEqOneLinProcEnv}$(k)$: When the trajectory length is sufficiently large compared to the order $k$, the proportion of trajectories with $t\le k$ becomes negligible. Moreover, these short trajectories correspond only to lower-order terms in the polynomial, and their omission is thus inconsequential. In this family of environments, every time step within a window of length $k$ contributes equally to the information at a given moment, since in Eq. \ref{eq:mcpdef}, for any permutation of these steps, the final one receives the full information as its monomial coefficient. Thus, the following conclusion can be drawn:
$$d_{\text{AllEqOneLinProcEnv}(k)}(x)=(\log_2 8)(1+x+x^2+\dots+x^{k-1})/k=\frac{3}{k}\sum_{\tau=0}^kx^\tau$$
\section{POMDP's Transition Invariance}
\subsection{Preliminaries}
\begin{definition}[\textbf{Independent Decision Process (IDP)}]
An \textbf{IDP} $\mathcal{I}\coloneqq\langle A, T_{0:\infty}\rangle$ consists of: action set $A$, reward dynamics at time $t$: $T_t: A\rightarrow\Delta_\real$.
\end{definition}
\textbf{Note}: IDP is a special case of MDP with only one state, which is why $\rho_0$ and $S$ are omitted from the definition.
\begin{definition}[\textbf{Stationary MDP}]
An MDP $\mdp=\mdpdef$ is \textbf{stationary} if $\exists T:S\times A\to\Delta_{S\times\real},~\forall t\in\nat,~T_t=T$.
\end{definition}
\textbf{Note}: In most of the literature, the MDP is stationary by default in the form of $\langle \rho_0, S, A, T\rangle$.
\begin{definition}[\textbf{High-Order MDP}]
A \textbf{$k$-th ($k\in\nat^+$) order MDP} $\mdp_k\coloneqq\mdpdef$ consists of: initial state distribution $\rho_0\in\Delta_S$; state set $S$; action set $A$; transition and reward dynamics at time $t$:
$$T_t:\begin{cases} S^k\times A^k \to \Delta_{S\times\real}, & (t\ge k-1) \\ S^{t+1}\times A^{t+1} \rightarrow \Delta_{S\times\real}, & (t<k-1)\end{cases}$$
\end{definition}

\textbf{Note}: The transition dynamics of $k$-th order MDP takes states and actions from the current step back up to at most $k$ previous steps as input. It can also be defined as a special case of POMDP that has a constant MDS representation $\forall h_t,~\{t,t-1,t-2,\dots,t-k+1\}\cap\nat\in\mathbf{D}(h_t)$.
\begin{definition}[\textbf{The Prefix Relation between Trajectories}]
Given a POMDP $\pomdp=\pomdpdef$ with trajectory set $H$, for any $h_{t_1},h^\prime_{t_2}\in H,~(t_1\le t_2)$, if $h_{t_1}=(z_{0:t_1},a_{0:t_1},r_{0:t_1-1}),~h^\prime_{t_2}=(z_{0:t_2},a_{0:t_2},r_{0:t_2-1})$, then $h_{t_1}$ is a \textbf{prefix} of $h^\prime_{t_2}$, denoted $h_{t_1}\preceq h_{t_2}^\prime$.
\label{def:prefix}
\end{definition}
\textbf{Note}: 
When a policy is specified, and new observations and rewards are generated step-by-step according to the transition dynamics specified by the POMDP, the resulting trajectory sequence $h_0,h_1,h_2,\dots$ satisfies $h_0\preceq h_1\preceq h_2\preceq\dots$.
\subsection{Equivalence Relations Form a Lattice}
\label{app:equivlattice}
In the main text, the symbol "$\wedge$" is used to combine equivalence relations, and this section will rationalize this way of expression.
\begin{definition}[\textbf{Lattice} (via partially ordered set (poset))]
Let $\langle L,\preceq\rangle$ be a poset, where $\preceq$ denotes a partial order on the set $L$.
$\langle L, \preceq\rangle$ is called a \textbf{lattice} if for all $a,b\in L$:
\begin{itemize}
\item The \textbf{supremum} (denoted $a\vee b$) of $a$ and $b$ exists in $L$, which satisfies $a\preceq a\vee b,~b\preceq a\vee b$, and for any $c\in L$ with $a\preceq c$ and $b\preceq c$, it follows that $a\vee b\preceq c$.
\item The \textbf{infimum} (denoted $a\wedge b$) of $a$ and $b$ exists in $L$, which satisfies $a\wedge b\preceq a,~a\wedge b\preceq b$, and for any $c\in L$ with $c\preceq a$ and $c\preceq b$, it follows that $c\preceq a\wedge b$.
\end{itemize}
\label{def:latticeposet}
\end{definition}
\begin{definition}[\textbf{Lattice} (via algebraic operations)]
Let $\langle L,\vee,\wedge\rangle$ be an algebraic system, where $\vee$ and $\wedge$ are binary operations on L.
$\langle L,\vee,\wedge\rangle$ is called a \textbf{lattice} if for all $a,b,c\in L$:
\begin{itemize}
\item \textbf{Commutativity}: $a\vee b=b\vee a$ and $a\wedge b=b\wedge a$.
\item \textbf{Associativity}: $(a\vee b)\vee c = a\vee(b\vee c)$ and $(a\wedge b)\wedge c = a\wedge(b\wedge c)$.
\item \textbf{Absorption}: $a\vee(a\wedge b)=a$ and $a\wedge(a\vee b)=a$.
\end{itemize}

\textbf{Note}: These operations induce a partial order $\preceq$ on $L$ via $a\preceq b\iff a\wedge b=a$ (or equivalently $a\vee b=b$), making the definitions above equivalent.
\end{definition}
\begin{theorem}[\textbf{Lattice of Equivalence Relations}]
Let $S$ be a set, denote $E(S)$ the set of all equivalence relations on $S$, then $\langle E(S),\subseteq\rangle$ is a lattice.
\end{theorem}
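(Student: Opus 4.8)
The plan is to verify the poset-based definition of a lattice (Def.~\ref{def:latticeposet}) directly: viewing each equivalence relation as a subset of $S\times S$, I will show that $\subseteq$ is a partial order on $E(S)$ and that every pair $R_1,R_2\in E(S)$ admits both an infimum and a supremum inside $E(S)$. That $\subseteq$ is a partial order comes for free from the power set $\langle\mathcal{P}(S\times S),\subseteq\rangle$, since $E(S)\subseteq\mathcal{P}(S\times S)$ and reflexivity, antisymmetry, and transitivity of $\subseteq$ are inherited.

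The workhorse is a single lemma: the intersection $\bigcap_{i\in I}R_i$ of any nonempty family of equivalence relations on $S$ is again an equivalence relation. I would check this one property at a time, since each of reflexivity, symmetry, and transitivity is preserved under intersection; for instance, if $(x,y)$ and $(y,z)$ lie in every $R_i$, then transitivity of each $R_i$ forces $(x,z)\in R_i$ for all $i$, hence $(x,z)\in\bigcap_{i\in I}R_i$. Given the lemma, the infimum is immediate: set $R_1\wedge R_2:=R_1\cap R_2$. It is an equivalence relation by the lemma, is contained in both $R_1$ and $R_2$, and any equivalence relation below both is contained in their intersection, so it is the greatest lower bound.

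The supremum is the delicate part, and I expect it to be the main obstacle: the naive candidate $R_1\cup R_2$ need not be transitive, so in general it fails to be an equivalence relation. The fix is to define the join as the smallest equivalence relation containing the union,
\[
R_1\vee R_2:=\bigcap\{R\in E(S): R_1\cup R_2\subseteq R\}.
\]
This family is nonempty because $S\times S$ is itself an equivalence relation containing $R_1\cup R_2$; by the lemma the intersection lies in $E(S)$; and by construction it contains both $R_1$ and $R_2$ while being contained in every equivalence relation that does, hence it is the least upper bound.

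The only real subtlety is thus ensuring the join is well-defined and minimal, which rests on two facts established above: the nonemptiness of the defining family (guaranteed by the top element $S\times S$) and the closure of $E(S)$ under arbitrary intersection. With both the meet and the join produced for every pair, $\langle E(S),\subseteq\rangle$ satisfies Def.~\ref{def:latticeposet} and is therefore a lattice. I note that the same argument yields arbitrary meets and joins, so $E(S)$ is in fact a complete lattice, although only the finite (pairwise) case is needed to justify the $\wedge$ notation used in the main text.
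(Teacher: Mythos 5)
Your proof is correct, but it takes a genuinely different route for the join. The meet coincides with the paper's: both take $R_1\wedge R_2=R_1\cap R_2$ and verify that intersections of equivalence relations remain equivalence relations. For the supremum, however, the paper gives an explicit bottom-up construction, $\phi\vee\psi\coloneqq\phi\cup(\psi\circ\phi)^\ast$, i.e.\ the relation obtained by chaining alternately through $\phi$ and $\psi$, and then spends most of the proof verifying by hand that this set is reflexive, symmetric, and transitive, and that it is the least upper bound. You instead define the join top-down as the intersection of all equivalence relations containing $R_1\cup R_2$, which is well-defined because the family is nonempty (it contains $S\times S$) and because $E(S)$ is closed under arbitrary nonempty intersections. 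Your route is shorter and less error-prone: it sidesteps the fiddly case analysis needed to show that the alternating-chain construction is symmetric and transitive, and, as you note, it yields arbitrary meets and joins for free, so $E(S)$ is in fact a complete lattice. What the paper's approach buys in exchange is an explicit description of which pairs are identified in the join (namely, those connected by a finite alternating chain), which can be useful when one wants to compute or reason about the combined relation concretely, as in the $\simeq_k\wedge\simeq^n$ constructions of the main text; your definition is impredicative and gives no such description. Both arguments establish the theorem; only the finite (pairwise) case is needed to justify the $\wedge$ notation used in Table~\ref{tab:hdps}.
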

\begin{proof}
By the property of $\subseteq$, $\langle E(S),\subseteq\rangle$ forms a poset.
For any $\phi,\psi\in E(S)$, we define:
\begin{equation}\phi\wedge\psi~\coloneqq\phi\cap\psi\label{eq:latticewedge}\end{equation}
\begin{equation}\phi\vee\psi~\coloneqq\phi\cup(\psi\circ\phi)^\ast\label{eq:latticevee}\end{equation}
where $\forall x,y\in E(S)$, $x\circ y\coloneqq \{(a,b)|~\exists c\in S,~(a,c)\in x,~(c,b)\in y\}$ and $x^\ast\coloneqq x\cup x\circ x\cup x\circ x\circ x\cup \dots$. \\
First, we show $\phi\wedge\psi,\phi\vee\psi\in E(S)$.
\begin{itemize}
\item\textbf{$\phi\wedge\psi$}:
\begin{itemize}
\item\textbf{Reflexivity}: $\forall a\in S$, $(a,a)\in\phi,(a,a)\in\psi\implies(a,a)\in\phi\cap\psi=\phi\wedge\psi$.
\item\textbf{Symmetry}: $\forall a,b\in S,~(a,b)\in\phi\wedge\psi\implies(a,b)\in\phi,~(a,b)\in\psi\implies(b,a)\in\phi,~(b,a)\in\psi\implies(b,a)\in\phi\wedge\psi$.
\item\textbf{Transitivity}:
$\forall a,b,c\in S,~(a,b),(b,c)\in\phi\wedge\psi\implies(a,b),(b,c)\in\phi,~(a,b),(b,c)\in\psi\implies(a,c)\in\phi,~(a,c)\in\psi\implies(a,c)\in\phi\wedge\psi$.
\end{itemize}
\item\textbf{$\phi\vee\psi$}:
\begin{itemize}
\item\textbf{Reflexivity}:
$\forall a\in S,~(a,a)\in\phi\implies(a,a)\in\phi\cup(\psi\circ\phi)^\ast=\phi\vee\psi$.
\item\textbf{Symmetry}:
$\forall a,b\in S,~(a,b)\in\phi\vee\psi\implies(a,b)\in\phi\cup(\psi\circ\phi)^\ast\implies(a,b)\in\phi$ or $(a,b)\in(\psi\circ\phi)^\ast$.

1. If $(a,b)\in\phi$, then $(b,a)\in\phi\implies(b,a)\in\phi\vee\psi$.

2. If $(a,b)\in(\psi\circ\phi)^\ast$, then $\exists k\in\nat,~\exists x_{0:2k-1}\in S^{2k},~a=x_0,~b=x_{2k-1},~\forall i\in\nat,i<k,~(x_{2i},x_{2i+1})\in\psi,~(x_{2i+1},x_{2i+2})\in\phi\implies(x_{2i+1},x_{2i})\in\psi,~(x_{2i+2},x_{2i+1})\in\phi\implies(b,a)\in(\psi\circ\phi)^k\subseteq(\psi\circ\phi)^\ast\subseteq\phi\vee\psi$.
\item\textbf{Transitivity}:
$\forall a,b,c\in S,~(a,b),(b,c)\in\phi\vee\psi\implies(a,b),(b,c)\in\phi$ or $(a,b)\in\phi,~(b,c)\in(\psi\circ\phi)^\ast$ or $(a,b)\in(\psi\circ\phi)^\ast,~(b,c)\in\phi$ or $(a,b),(b,c)\in(\psi\circ\phi)^\ast$.

1. If $(a,b),(b,c)\in\phi$, then $(a,c)\in\phi\subseteq\phi\vee\psi$.

2. If $(a,b)\in\phi,~(b,c)\in(\psi\circ\phi)^\ast$, then $(a,c)\in\phi\circ(\psi\circ\phi)^\ast=(\{(x,x)|x\in S\}\circ\phi)\circ(\psi\circ\phi)^\ast\subseteq(\psi\circ\phi)\circ(\psi\circ\phi)^\ast=(\psi\circ\phi)^\ast\subseteq\phi\vee\psi$.

3. If $(a,b)\in(\psi\circ\phi)^\ast,~(b,c)\in\phi$, then $\exists k\in\nat,~(a,c)\in(\psi\circ\phi)^k\circ\phi=(\psi\circ\phi)^k\subseteq(\psi\circ\phi)^\ast\subseteq\phi\vee\psi$.

4. If $(a,b),(b,c)\in(\psi\circ\phi)^\ast$, then $(a,c)\in(\psi\circ\phi)^\ast\circ(\psi\circ\phi)^\ast=(\psi\circ\phi)^\ast\subseteq\phi\vee\psi$.
\end{itemize}
\end{itemize}
Then we show that such definitions of $\wedge,\vee$ meet the properties required by Definition \ref{def:latticeposet}.
\begin{itemize}
\item\textbf{Supremum}:
\begin{itemize}
\item\textbf{$\phi\subseteq\phi\vee\psi$}: \\
$\phi\subseteq\phi\cup(\psi\circ\phi)^\ast\implies\phi\subseteq\phi\vee\psi$.
\item\textbf{$\psi\subseteq\phi\vee\psi$}: \\
$\psi=\psi\circ\{(x,x)|x\in S\}\subseteq\psi\circ\phi\subseteq(\psi\circ\phi)^\ast\subseteq\phi\cup(\psi\circ\phi)^\ast\implies\psi\subseteq\phi\vee\psi$.
\item\textbf{$\phi\subseteq\varphi,~\psi\subseteq\varphi\implies\phi\vee\psi\subseteq\varphi$}: \\
$\phi\vee\psi=\phi\cup(\psi\circ\phi)^\ast\subseteq\varphi\cup(\varphi\circ\varphi)^\ast=\varphi\implies\phi\vee\psi\subseteq\varphi$.
\end{itemize}
\item\textbf{Infimum}:
\begin{itemize}
\item\textbf{$\phi\wedge\psi\subseteq\phi$}: \\
$\phi\wedge\psi=\phi\cap\psi\subseteq\phi$.
\item\textbf{$\phi\wedge\psi\subseteq\psi$}: \\
$\phi\wedge\psi=\phi\cap\psi\subseteq\psi\implies\phi\wedge\psi\subseteq\psi$.
\item\textbf{$\varphi\subseteq\phi,~\varphi\subseteq\psi\implies\varphi\subseteq\phi\wedge\psi$}: \\
$\varphi\subseteq\phi,~\varphi\subseteq\psi\implies\varphi\subseteq\phi\cap\psi=\phi\wedge\psi$.
\end{itemize}
\end{itemize}
Therefore, $\langle E(S),\subseteq\rangle$ is a lattice with $\wedge,\vee$ defined in Eq. \ref{eq:latticewedge}, \ref{eq:latticevee}.
\end{proof}
\begin{corollary}
Given a trajectory set $H$ of any POMDP, denote $E(H)$ the set of all equivalence relations on $H$, then $\langle E(H),\subseteq\rangle$ is a lattice.
\end{corollary}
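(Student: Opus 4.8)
The plan is to treat this corollary as an immediate specialization of the preceding Theorem on the lattice of equivalence relations, which asserts that $\langle E(S),\subseteq\rangle$ is a lattice for an \emph{arbitrary} set $S$. Since that theorem's proof imposes no structure on $S$ whatsoever---using only reflexivity, symmetry, and transitivity of the relations together with the set operations of intersection and reflexive-transitive closure (Eq. \ref{eq:latticewedge}, \ref{eq:latticevee})---the entire argument transfers once we confirm that the trajectory set $H$ of an HDP is a legitimate set.

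First I would dispatch this set-theoretic prerequisite. By the HDP setup, every trajectory $h_t=(z_{0:t},a_{0:t},r_{0:t-1})$ is a finite tuple with entries drawn from $Z$, $A$, and $\real$, so $H$ embeds into the union $\bigcup_{t\ge 0}\big(Z^{t+1}\times A^{t+1}\times\real^{t}\big)$. As a countable union of products of sets, this is itself a set, and therefore so is $H$. With $H$ recognized as a valid instance of the abstract $S$, I would simply invoke the theorem with $S=H$: the meet $\phi\wedge\psi=\phi\cap\psi$ and join $\phi\vee\psi=\phi\cup(\psi\circ\phi)^\ast$ are inherited verbatim, and the supremum and infimum axioms hold by the identical verification already carried out, since nothing in that verification appealed to any special property of the underlying set.

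The only conceivable obstacle is the set-theoretic hygiene just addressed---ruling out the possibility that $H$ might be a proper class and thus fall outside the theorem's reach. Once the embedding above is in hand this concern evaporates, and no further computation is required; the corollary is genuinely a one-line consequence of the theorem, which is precisely why it is stated as a corollary rather than proved afresh.
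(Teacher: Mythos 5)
Your proposal is correct and matches the paper's treatment: the corollary is an immediate instantiation of the preceding theorem with $S=H$, and the paper likewise offers no separate proof. The extra set-theoretic check that $H$ is a genuine set is harmless but not needed for the paper's purposes.
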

\subsection{Stationarity and Consistency}
\label{app:statandcons}
\textbf{Stationarity}: We use the term (intra-trajectory) \textbf{stationarity} to describe the invariance of the transition dynamics across steps within the same trajectory.
It means that during any single run of the environment, transitions between any two steps follow the same logic.
The formal definition is as follows:
\begin{definition}[\textbf{$p$-Stationary}]
For any POMDP $\pomdp=\pomdpdef$ with trajectory set $H$, let $p:H\to p(H)$ be a map.
The POMDP $\pomdp$ is $p$-\textbf{stationary} if, for all $h_{t_1},h_{t_2}\in H$ where $h_{t_1}\preceq h_{t_2}$ and $p(h_{t_1})=p(h_{t_2})$, the transition probabilities satisfy $T_{t_1}(z,r|z_{0:t_1},a_{0:t_1})=T_{t_2}(z,r|z_{0:t_2},a_{0:t_2})$ for all $z\in Z,~r\in\real$.
\end{definition}
To facilitate subsequent discussions, we first present the concept of terminal trajectory.
\begin{definition}[\textbf{Terminal Trajectory}]
For any POMDP $\pomdp$ with trajectory set $H$, a trajectory $h\in H$ is a \textbf{terminal trajectory} if there exists no $h^\prime$ such that $h\preceq h^\prime$. The set of all terminal trajectories is denoted by $\partial H$.
\end{definition}
\textbf{Note}: Based on the above definitions, the following conclusions can be drawn.
\begin{itemize}
    \item Since $h_{t_1}\preceq h_{t_2}$, this implies there exists $h_T\in\partial H$ such that $h_{t_1}\preceq h_{t_2}\preceq h_T$.
In this sense, $h_{t_1},h_{t_2}$ can be said to lie within the same trajectory $h_{T}$.
    \item In this paper, the adopted $p$ satisfies $p:(z_{0:t},a_{0:t},r_{0:t-1})\mapsto(z_{t-k+1,t},a_{t-k+1,t})$ which means it considers the most recent $k$ observation-action pairs.
    From this, $\simeq_k$-invariant can be derived.
    \item In this paper, we \textbf{break stationarity} by the following method: trajectories are divided into several stages by time intervals of equal length, with the same transition dynamics applied within each stage but potentially different ones across stages.
    \item It should be noted that this is not the only way to break stationarity; we merely chose one that is relatively simple and easy to implement.
    The switching of transition dynamics within a trajectory can depend not only on the numerical value of the time step but also on special events occurring in the trajectory, which is quite common in real-world problems.
\end{itemize}

\textbf{Consistency}: 
We use the term (inter-trajectory) \textbf{consistency} to describe the invariance of the transition dynamics among different trajectories.
It means that all trajectories follow the same transition logic at moments equivalent under some criterion $q$.
The formal definition is as follows:
\begin{definition}[\textbf{$q$-Consistent}]
For any POMDP $\pomdp=\pomdpdef$ with trajectory set $H$, let $q:\nat\to q(\nat)$ be a map.
The POMDP $\pomdp$ is $q$-\textbf{consistent} if, for all $h_{t_1},h^\prime_{t_2}\in H$ where $q(t_1)=q(t_2)$, the transition probabilities satisfy $T_{t_1}(z,r|z_{0:t_1},a_{0:t_1})=T_{t_2}(z,r|z^\prime_{0:t_2},a^\prime_{0:t_2})$ for all $z\in Z,~r\in\real$.
\end{definition}

\textbf{Note}: Based on the above definitions, the following conclusions can be drawn.
\begin{itemize}
    \item Since the above definition does not restrict whether $h_{t_1}$ and $h^\prime_{t_2}$ form a prefix relation, this captures the meaning of "different trajectories" in the description of the consistency concept.
    \item In this paper, the adopted $q$ satisfies $q:t\mapsto\lfloor\frac{t}{n}\rfloor$ which means it treats moments within the same interval of length $n$ as equivalent.
    From this, $\simeq^n$-invariant can be derived.
    \item In this paper, we \textbf{break consistency} using the following approach: the observation space is divided into several intervals of equal length, and the category of a trajectory is determined by the interval containing its initial observation.
    Trajectories within the same category share the same transition dynamics at equivalent time steps, while no such constraint applies to trajectories across different categories.
    \item  It should be noted that this is not the only way to break consistency: we merely chose one that is relatively simple and easy to implement.
    We partition trajectories into disjoint sets according to the interval containing their initial observations, which satisfy
    $\forall h_{t_1},h_{t_2}^\prime\in H,~h_{t_1}\sim h_{t_2}^\prime\iff\exists h\in H,~h\preceq h_{t_1},~h\preceq h_{t_2}^\prime$.
    Thus, this partitioning method enables the agent to possess sufficient information at any moment in the trajectory to determine the category of the current trajectory.
    However, this is not mandatory, as other ways of partitioning the trajectory set can also be adopted, in which case this property is violated.
\end{itemize}
\subsection{Visualization of (Non-)Stationarity and (Non-)Consistency}
$k$ is the order of autoregressive (AR) process, meaning $z_{t+1}$ depends on the most recent $k$ observations $z_{t-k+1:t}$.

$n$ denotes the division of the time axis into segments of length $n$, within each segment, the same (stationary) or different (non-stationary) AR coefficients can be used for transition calculations.

$m$ represents the number of intervals in the uniform partition of the observation set. 
A trajectory’s category is determined by the interval containing its initial observation. Across different categories, the AR coefficients used for transition calculations within the same time segment can be the same (consistent) or different (non-consistent).

\label{app:exptabfig}
\begin{figure}[h]
\centering
\includegraphics[width=0.4\columnwidth]{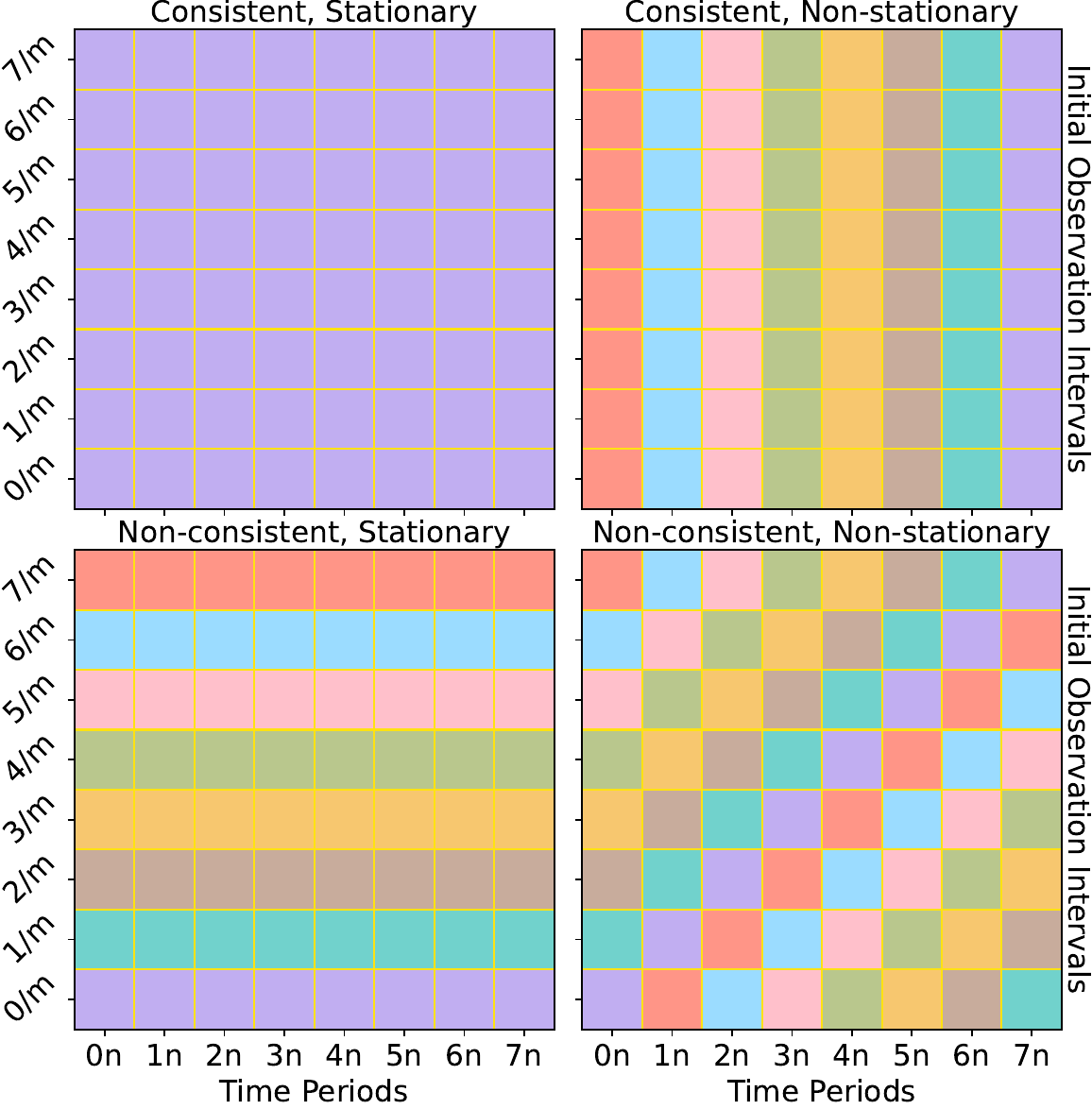}
\caption{Corresponding to Tab. \ref{tab:pomdps}, the x-axis shows $n$-length time periods in trajectories, and the y-axis shows distinct trajectories differentiated by the $\frac{1}{m}$-length intervals that their first observations belong to. Different colors correspond to different transitional and reward dynamics.}
\end{figure}

\begin{table}[h]
\centering
\begin{tabular}{|c|c|c|}
\hline
\textbf{Invariance} & \textbf{Stationary} & \textbf{Non-stationary} \\
\hline
\textbf{Consistent} & $\simeq_k$ & $\simeq_k\wedge\simeq^n$ \\
\hline
\textbf{Non-consistent} & $\simeq_k\wedge\simeq^{\lfloor mz_0\rfloor}$ & $\simeq_k\wedge\simeq^n\wedge\simeq^{\lfloor mz_0\rfloor}$ \\
\hline
\end{tabular}
\caption{Example of different transition invariances. $\simeq_k$ denotes a relation where trajectories $h_{t_1},h_{t_2}^\prime$ are equivalent if $(z,a)_{t_1-k+1:t_1}=(z^\prime,a^\prime)_{t_2-k+1:t_2}$; $\simeq^n$ where equivalent if $\lfloor t_1/n\rfloor=\lfloor t_2/n\rfloor$; $\simeq^{\lfloor mz_0\rfloor}$ where equivalent if $\exists i,z_0,z_0^\prime\in[\frac{i}{m},\frac{i+1}{m})$.}
\label{tab:pomdps}
\end{table}
\begin{table}[H]
\centering
\begin{tabular}{|c|c|c|}
\hline
\textbf{Invariance} & \textbf{Stationary} & \textbf{Non-stationary} \\
\hline
\textbf{Consistent} & $z_{t+1}=w_0z_t+w_1z_{t-1}$ & $z_{t+1}=w_0^tz_t+w_1^tz_{t-1}$ \\
\hline
\textbf{Non-consistent} & $z_{t+1}=\begin{cases}w_0z_t+w_1z_{t-1}~(z_0\le\frac{1}{2}) \\ w_0^\prime z_t+w_1^\prime z_{t-1}~(z_0>\frac{1}{2})\end{cases}$ & $z_{t+1}=\begin{cases}w_0^tz_t+w_1^tz_{t-1}~(z_0\le\frac{1}{2}) \\ w_0^{\prime t}z_t+w_1^{\prime t}z_{t-1}~(z_0>\frac{1}{2})\end{cases}$ \\
\hline
\end{tabular}
\caption{Examples of observation generation AR processes under different stationarity and consistency conditions, with $k=2,n=1,m=2$.}
\end{table}

\section{POMDP Environments Constructed from Scratch}
\label{app:constructedfromscratch}

\begin{figure}[h]
\centering
\includegraphics[width=0.8\columnwidth]{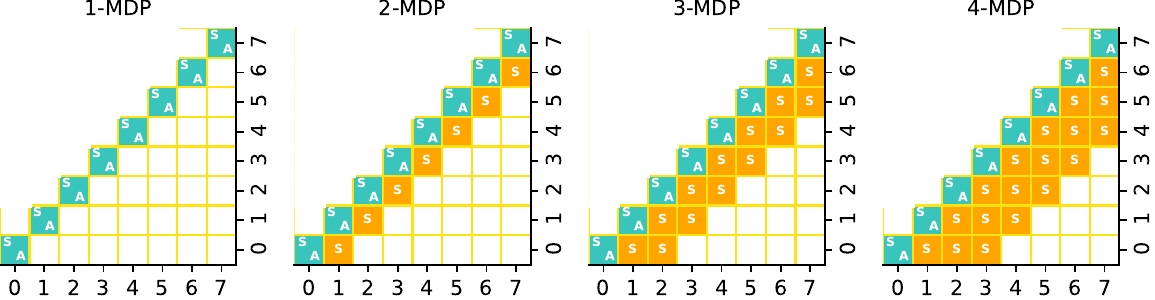}
\caption{MDS visualizations of high-order MDPs, with order increasing from $1$ to $4$.}\label{fig:mdpmds}
\end{figure}
\begin{figure}[h]
\centering
\includegraphics[width=0.5\columnwidth]{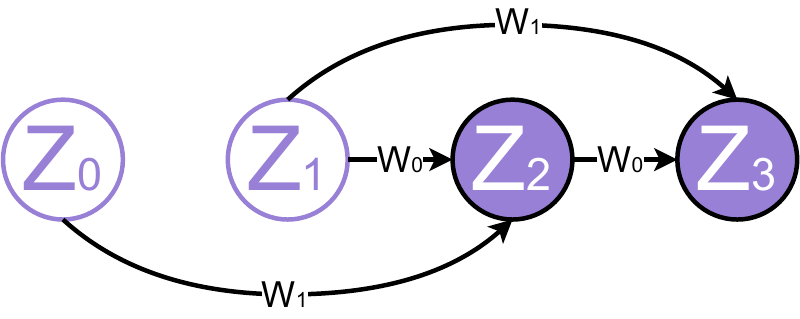}
\caption{An example AR process with coefficients $w_0,w_1$. $z_0,~z_1$ is sampled independently from $U(0,1)$.}\label{fig:arpvis}
\end{figure}
\noindent{\textbf{Order Range}: $k\in\{0,1,2,3,4,5,6,7\}$.}\\
\textbf{Observation Space}: $[0,1)$.\\
\textbf{Action Space}: $\{0,1,2,3,4,5,6,7\}$.\\
\textbf{Trajectory Length}: $64$.\\
\textbf{Initialization}: $z_t\sim U(0,1)~(0\le t<k)$.\\
\textbf{Coefficients}: $w_{0:7}=(8,9,10,11,12,13,14,15)/8\subset[1,2)$.\\
\textbf{Reward}: $r_t=\mathbf{1}_{a_t=\left\lfloor8\left(\sum_{i=0}^{k-1}w_iz_{t-i}\right)\mod 1\right\rfloor}~(t\ge k-1),~r_t=\mathbf{1}_{a_t=\left\lfloor8\left(\sum_{i=0}^{t}w_iz_{t-i}\right)\mod 1\right\rfloor}~(t<k-1)$.

Note that $z_{t+1}=\left(\sum_{i=0}^{k-1}w_iz_{t-i}\right)\mod 1$ when $t\ge k-1$, otherwise sampled from $U(0,1)$.
\subsection{\textbf{AllEqOneLinProc$(k)$}}
\textbf{Coefficients}: $1_{0:7}$.\\
\textbf{Transition}: $z_{t+1}=\left(\sum_{i=0}^{k-1}z_{t-i}\right)\mod 1$.\\
\textbf{Description}: AR process with all coefficients set to $1$, used to test the memory model's ability to retain observations within a finite time window of length $k$. Each observation in the window has the same weight.
\subsection{\textbf{AllEqOneQuadProc$(k)$}}
\textbf{Coefficients}: $1_{0:7}$.\\
\textbf{Transition}: $z_{t+1}=\left(\sum_{i=0}^{k-1}z_{t-i}^2\right)\mod 1$.\\
\textbf{Description}: Similar to \textbf{AllEqOneLinProcEnv}, but uses squared AR process.
\subsection{\textbf{AllEqLinProc$(k)$}}
\textbf{Coefficients}: $w_{0:7}$.\\
\textbf{Transition}: $z_{t+1}=\left(\sum_{i=0}^{k-1}w_iz_{t-i}\right)\mod 1$.\\
\textbf{Description}: AR process with varying coefficients, used to test the performance of the memory model in consistent and stationary environments of different orders.
All coefficients are no less than $1$, which is to prevent $z_t$ from converging to $0$.
\subsection{\textbf{TimeEqLinProc$(k)$}}
\textbf{Coefficients}: Define operator $\sigma: w_{0:T}\mapsto(w_T,w_{0:T-1})$.

Then
$w_{0:7,0:7}\coloneqq (\sigma^0(w_{0:7}),\sigma^1(w_{0:7}),\sigma^2(w_{0:7}),\dots,\sigma^7(w_{0:7}))=
\begin{pmatrix}
    8  &9 &10 &11 &12 &13 &14 &15 \\
    15 &8 &9 &10 &11 &12 &13 &14 \\ 
    14 &15 &8 &9 &10 &11 &12 &13 \\ 
    13 &14 &15 &8 &9 &10 &11 &12 \\ 
    12 &13 &14 &15 &8 &9 &10 &11 \\ 
    11 &12 &13 &14 &15 &8 &9 &10 \\ 
    10 &11 &12 &13 &14 &15 &8 &9 \\ 
    9 &10 &11 &12 &13 &14 &15 &8 \\ 
\end{pmatrix}/8$\\
\textbf{Transition}: $z_{t+1}=\left(\sum_{i=0}^{k-1}w_{\lfloor t/8\rfloor,i}z_{t-i}\right)\mod 1$.\\
\textbf{Description}: AR process with varying coefficients, used to test the performance of the memory model in consistent but non-stationary environments of different orders.
Non-stationarity means that the transition functions differ between various moments within the same trajectory, i.e., the adopted AR coefficients are different.
To avoid either insignificant differences in transition relationships between time segments due to an excessive number of distinct coefficients or excessive instability of the process caused by overly large coefficients, a circulant matrix construction is employed.
\subsection{\textbf{TrajEqLinProc$(k)$}}
\textbf{Coefficients}:

$w^\prime_{0:7,0:7}\coloneqq (\sigma^0(w_{0:7}),\sigma^{-1}(w_{0:7}),\sigma^{-2}(w_{0:7}),\dots,\sigma^{-7}(w_{0:7}))=\begin{pmatrix}
    8  &9 &10 &11 &12 &13 &14 &15 \\
    9 &10 &11 &12 &13 &14 &15 &8 \\
    10 &11 &12 &13 &14 &15 &8 &9 \\
    11 &12 &13 &14 &15 &8 &9 &10 \\
    12 &13 &14 &15 &8 &9 &10 &11 \\
    13 &14 &15 &8 &9 &10 &11 &12 \\
    14 &15 &8 &9 &10 &11 &12 &13 \\
    15 &8 &9 &10 &11 &12 &13 &14 \\
\end{pmatrix}/8$\\
\textbf{Transition}: $z_{t+1}=\left(\sum_{i=0}^{k-1}w^\prime_{\lfloor 8z_0\rfloor,i}z_{t-i}\right)\mod 1$.\\
\textbf{Description}: AR process with varying coefficients, used to test the performance of the memory models in non-consistent but stationary environments of different orders.
Non-consistency means that in the same environment, an agent may face multiple possible tasks, each with its unique transition regulation; i.e., the AR coefficients used differ across sequences whose initial observations belong to different intervals.
To avoid either insignificant differences in transition relationships between different trajectory categories due to an excessive number of distinct coefficients or excessive instability of the process caused by overly large coefficients, a circulant matrix construction is employed.
\subsection{\textbf{NoEqLinProc$(k)$}}
\textbf{Coefficients}: $w_{i,j,0:7}\coloneqq\sigma^{i-j}(w_{0:7}),~i,j\in\{0,1,2,3,4,5,6,7\}$.\\
\textbf{Transition}:
$z_{t+1}=\left(\sum_{i=0}^{k-1}w_{\lfloor t/8\rfloor,\lfloor8z_0\rfloor,i}z_{t-i}\right)\mod 1$.\\
\textbf{Description}: AR process with varying coefficients, used to test the performance of the memory models in non-consistent and non-stationary environments of different orders.
A circulant tensor is used based on the considerations outlined in the previous subsections.
\section{State Aggregation}
\label{app:stateagg}
\subsection{History Aggregator for State (HAS) and Optimality Preserving Property}
\begin{definition}[\textbf{Latest Extraction Operators}]
The latest state, action, and reward extraction operators at time $t$ are:
\begin{itemize}
    \item $\mathscr{L}_{S,t}: h_t\mapsto s_t~~(\forall t\in\nat,~\forall h_t\in H_t)$;
    \item $\mathscr{L}_{A,t}: h_{t+1}\mapsto a_{t}~~(\forall t\in\nat,~\forall h_{t+1}\in H_{t+1})$;
    \item $\mathscr{L}_{\real,t}: h_{t+1}\mapsto r_{t}~~(\forall t\in\nat,~\forall h_{t+1}\in H_{t+1})$.
\end{itemize}
\end{definition}
\begin{definition}[\textbf{History Aggregator for State (HAS)}]
For any MDP $\mdp=\mdpdef$, an HAS $\ha{S}{}\coloneqq\hadef{S}{}$ is a series of maps in which $\mathscr{A}_{S,t}:H_t\to \mathscr{A}(H),~(\forall t\in\nat)$ where $\mathscr{A}(H)$ represents a set called the target state set of $\mathscr{A}_S$.
\end{definition}
\textbf{Note}:
For convenience, we assume $\ha{S,t}{}: S^{t+1}\to Z$, where $Z$ denotes the observation space of the target POMDP.
This means actions and rewards do not participate in constructing the current observation.

\begin{definition}[\textbf{Reversibility of HAS on MDP}]
An HAS $\ha{S}{}=\hadef{S}{}$ on MDP $\mdp=\mdpdef$ is reversible iff there exists a series of maps, denoted as $\ha{S}{\ast}\coloneqq\hadef{S}{\ast}$, which satisfy:
$$\mathscr{A}_{S,t}^\ast(\{\mathscr{A}_{S,\tau}(h_\tau)\}_{\tau=0}^t)=s_t,~~(\forall t\in\nat)$$
where $h_\tau\in H_\tau, ~h_0\prec h_1\prec\cdots\prec h_t, ~s_t=\mathscr{L}_{S,t}(h_t)$.
\end{definition}

The reversibility condition of the HAS ensures that a decision-making algorithm can reconstruct the original MDP state from the POMDP trajectory.
This ensures that a reversible HAS does not alter the original belief state (i.e., the state of the MDP itself).
Thus, the reversible HAS ensures that the inherent difficulty remains unchanged when wrapping an MDP into an POMDP. Here, inherent difficulty refers to the difficulty posed to the RL algorithm itself, excluding the difficulty of extracting useful information from POMDP history to construct MDP states (i.e., the difficulty for the memory model).
\begin{definition}[\textbf{HAS Induced POMDP}]
For any MDP $\mdp=\mdpdef$ with trajectory set $H=\bigsqcup_{t=0}^\infty H_t$ and an HAS $\ha{S}{}=\hadef{S}{}$, where $\ha{S,t}{}:H_t\to\ha{}{}(H)$.
Applying $\ha{S}{}$ to $\mdp$ yields an POMDP $\pomdp=\pomdpdefp$, where: $\rho_0^\prime=\rho_0\circ\ha{S,0}{\ast}$, $Z=\ha{}{}(H)$, $T^\prime_t(z_{t+1},r_t|z_{0:t},a_{0:t})=T_t(s_{t+1},r_t|s_t,a_t),~z_\tau=\ha{S,\tau}{}(s_{0:\tau}),~(0\le \tau\le t+1)$, which is called the POMDP induced from $\mdp$ by HAS $\ha{S}{}$.
\end{definition}
\begin{definition}[\textbf{HAS Induced Policy}]
Given an MDP $\mdp=\mdpdef$, let $\pomdp=\pomdpdefp$ be the POMDP induced by HAS $\ha{S}{}$.
For any policy $\pi:S\to \Delta_A$ of $\mdp$, the policy $\tilde{\pi}_t(a_t|z_{0:t})\coloneqq \pi_t(a_t|s_t)$ where $z_\tau=\ha{S,\tau}{}(s_{0:\tau}),~(0\le\tau\le t)$, is called the policy induced from $\pi$ by HAS $\ha{S}{}$.
\end{definition}
\textbf{Note}: It can be stated that for any policy $\tilde{\pi}$ of POMDP $\pomdp$, there exists a policy $\pi$ of MDP $\mdp$ satisfying $\pi_t(a_t|s_t)=\tilde{\pi}_t(a_t|z_{0:t}),~s_\tau=\ha{S,\tau}{\ast}(z_{0:\tau}),~(0\le\tau\le t)$, such that $\tilde{\pi}$ is induced from $\pi$ by HAS $\ha{S}{}$.

\begin{definition}[\textbf{MDP Optimal Policy}]
For any MDP $\mdp=\mdpdef$, $\pi^\ast\coloneqq\pi^\ast_{0:\infty}$ is said to be an optimal policy of $\mdp$ if $\forall s_0\in S,~\pi^\ast\in\arg\max_{\pi} V^{\pi}(s_0)$, where $V^\pi$ is the value function: $V^\pi(s_0)=\ex_{h_{0:\infty}\sim(\pi_{0:\infty},T_{0:\infty}|s_0)}\sum_{t=0}^\infty r_{t}$ and $h_{0:\infty}=(s_{0:\infty},a_{0:\infty},r_{0:\infty})$.
\end{definition}
\textbf{Note}: The discount factor $\gamma$ can be absorbed into the time-dependent transition representation $T_t$ or the MDP can be restricted to a finite length before falling into an absorbing state with subsequent rewards fixed at zero. Both approaches ensure the convergence of the value function for the MDP defined above.

\begin{definition}[\textbf{POMDP Value Function}]
Given POMDP $\pomdp=\pomdpdef$ with trajectory set $H$ and a policy $\pi=\pi_{0:\infty}$, the \textbf{value function} is 
$V^\pi(z_0)\coloneqq\ex_{h_{0:\infty}\sim(\pi_{0:\infty},T_{0:\infty}|z_0)}\sum_{t=0}^\infty r_t$
where $h_{0,\infty}=(z_{0:\infty},a_{0:\infty},r_{0:\infty})\in H$, $(\pi_{0:\infty},T_{0:\infty}|z_0)$ represents the probability of generating a terminal trajectory that starts from $z_0$ through $T_{0,\infty}$ and policy $\pi_{0:\infty}$.
\end{definition}
\begin{definition}[\textbf{POMDP Optimal Policy}]
For any POMDP $\pomdp=\pomdpdef$, $\pi^\ast\coloneqq\pi^\ast_{0:\infty}$ is said to be an \textbf{optimal policy} of $\pomdp$ if $\forall z_0\in Z,~\pi^\ast\in\arg\max_\pi V^\pi(z_0)$.
\end{definition}

\begin{theorem}[\textbf{Optimality Preserving Property of Reversible HAS}]
Given an MDP $\mdp=\mdpdef$, let $\pomdp=\pomdpdefp$ be the POMDP induced by a reversible HAS $\ha{S}{}$.
For any optimal policy $\pi^\ast$ of $\mdp$, the policy $\tilde{\pi}^\ast$ induced from $\pi^\ast$ by $\ha{S}{}$ is an optimal policy of POMDP $\pomdp$.
\end{theorem}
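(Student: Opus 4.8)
The plan is to show that the reversible HAS induces a reward-preserving correspondence between the trajectory distributions of $\mdp$ and $\hdp$, so that every MDP policy and its induced HDP policy attain identical value; optimality then transfers immediately. First I would use reversibility to construct the map $\Psi$ sending each MDP trajectory $(s_{0:t},a_{0:t},r_{0:t-1})$ to the HDP trajectory $(z_{0:t},a_{0:t},r_{0:t-1})$ with $z_\tau=\ha{S,\tau}{}(s_{0:\tau})$. Reversibility guarantees $s_\tau=\ha{S,\tau}{\ast}(z_{0:\tau})$, so $\Psi$ is injective; surjectivity onto the trajectories of $\hdp$ follows from $Z=\ha{}{}(H)$. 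Since $\Psi$ respects the prefix structure, it extends consistently to terminal trajectories. This same recoverability is what makes the induced transition $T_t^\prime(z_{t+1},r_t\mid z_{0:t},a_{0:t})=T_t(s_{t+1},r_t\mid s_t,a_t)$ well-defined, because $z_{0:t+1}$ determines $s_{t+1}$.

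Next I would verify that $\Psi$ preserves the probability of generating any terminal trajectory. Factoring that probability into the initial term, the per-step policy terms, and the per-step transition terms, I would match each factor across $\Psi$: the initial distributions agree since $\rho_0^\prime=\rho_0\circ\ha{S,0}{\ast}$ yields $\rho_0^\prime(z_0)=\rho_0(s_0)$; the policy terms agree by the definition $\tilde{\pi}_t(a_t\mid z_{0:t})=\pi_t(a_t\mid s_t)$; and the transition terms agree by the definition of the induced HDP. Because corresponding trajectories carry identical reward sequences, integrating $\sum_t r_t$ against these equal measures gives $V^{\tilde{\pi}}(z_0)=V^{\pi}(s_0)$ with $s_0=\ha{S,0}{\ast}(z_0)$, for every MDP policy $\pi$ and its induced policy $\tilde{\pi}$.

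Finally I would transfer optimality. By the Note following the induced-policy definition, every HDP policy $\tilde{\sigma}$ arises as the induced policy of some MDP policy $\sigma$, and the previous step gives $V^{\tilde{\sigma}}(z_0)=V^{\sigma}(s_0)$. Optimality of $\pi^\ast$ then yields, for all such $\tilde{\sigma}$, the chain $V^{\tilde{\pi}^\ast}(z_0)=V^{\pi^\ast}(s_0)\ge V^{\sigma}(s_0)=V^{\tilde{\sigma}}(z_0)$, so $\tilde{\pi}^\ast$ maximizes the HDP value from every $z_0$ and is therefore optimal.

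The hard part will be the probability-preservation step: I must ensure the induced transition depends on the observation history only through the recovered state $s_t$, which is exactly where reversibility is indispensable, since without it distinct state sequences could collapse to the same observation sequence and the pushforward measure on reward sequences would be ill-defined. The remainder is routine bookkeeping of the trajectory factorization and, for continuous $S$ and $Z$, phrasing the matching in terms of pushforward measures under $\Psi$ rather than pointwise densities.
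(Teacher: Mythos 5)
Your proposal is correct and takes essentially the same route as the paper: both arguments hinge on the reversibility-induced bijection between MDP and HDP trajectory sets, the resulting identity $V^{\tilde{\pi}}(z_0)=V^{\pi}(s_0)$ for every policy and its induced counterpart, and the fact that every HDP policy arises as an induced policy. The only difference is cosmetic --- the paper packages the final step as a proof by contradiction while you transfer optimality directly (and you spell out the trajectory-probability factorization that the paper leaves implicit), which if anything makes your version slightly more complete.
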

\begin{proof}
Since $\pi^\ast$ is an optimal policy of $\mdp$, then \begin{equation}\pi^\ast\in\arg\max_{\pi}V^\pi(s_0)=\arg\max_{\pi}\ex_{h_{0:\infty}\sim(\pi_{0:\infty},T_{0:\infty}|s_0)}\sum_{t=0}^\infty r_{t}\label{eq:mdpoptcond}\end{equation}
Since $\tilde{\pi}^\ast$ is the policy induced from $\pi^\ast$ by reversible HAS $\ha{S}{}$, then 
\begin{equation}\tilde{\pi}_t^\ast(a_t|z_{0:t})=\pi_t^\ast(a_t|s_t)\end{equation}
\begin{equation}z_\tau=\ha{S,\tau}{}(s_{0:\tau}),~s_\tau=\ha{S,\tau}{\ast}(z_{0:\tau})\label{eq:hasbij}\end{equation}
Now the target is proving 
\begin{equation}\tilde{\pi}^\ast\in\arg\max V^{\tilde{\pi}}(z_0)=\arg\max_{\tilde{\pi}}\ex_{h^\prime_{0:\infty}\sim(\tilde{\pi}_{0:\infty},T^\prime_{0:\infty}|z_0)}\sum_{t=0}^\infty r_{t}\end{equation}
Assume there is a policy $\tilde{\pi}$ such that $\exists z_0,~V^{\tilde{\pi}}(z_0)>V^{\tilde{\pi}^\ast}(z_0)$, which is
\begin{equation}\ex_{h^\prime_{0:\infty}\sim(\tilde{\pi}_{0:\infty},T^\prime_{0:\infty}|z_0)}\sum_{t=0}^\infty r_{t}>\ex_{h^\prime_{0:\infty}\sim(\tilde{\pi}^\ast_{0:\infty},T^\prime_{0:\infty}|z_0)}\sum_{t=0}^\infty r_{t}\label{eq:contracond}\end{equation}
By Eq. \ref{eq:hasbij}, there is a bijection $\alpha$ between $H$ (trajectory set of $\mdp$) and $H^\prime$ (trajectory set of $\pomdp$):
\begin{equation}\alpha:(s_{0:t},a_{0:t},r_{0:t})\mapsto(\ha{S,t}{}(s_{0:t}),a_{0:t},r_{0:t}),~\alpha^{-1}:(z_{0:t},a_{0:t},r_{0:t})\mapsto(\ha{S,t}{\ast}(z_{0:t}),a_{0:t},r_{0:t})\end{equation}
Denote the MDP policy from which $\tilde{\pi}$ is induced as $\pi$, we can rewrite Eq. \ref{eq:contracond}:
\begin{equation}\ex_{h_{0:\infty}\sim(\pi_{0:\infty},T_{0:\infty}|s_0)}\sum_{t=0}^\infty r_{t}>\ex_{h_{0:\infty}\sim(\pi^\ast_{0:\infty},T_{0:\infty}|s_0)}\sum_{t=0}^\infty r_{t}\end{equation}
Therefore $\exists s_0\in S,~V^\pi(s_0)>V^{\pi^\ast}(s_0)$, contradicts Eq. \ref{eq:mdpoptcond}. \\
Thus, assumption Eq. \ref{eq:contracond} can not be true, which means $\forall z_0\in Z,~\tilde{\pi}^\ast\in\arg\max_{\tilde{\pi}} V^{\tilde{\pi}}(z_0)$, $\tilde{\pi}^\ast$ is an optimal policy of POMDP $\pomdp$.
\end{proof}
\textbf{Note}: The proof of this property is not provided in the HAS paper \href{https://arxiv.org/abs/2506.24026}{(Wang et al., 2025)}.
\subsection{Convolution Based Reversible HAS}
\label{subsec:convhas}
\textbf{General Form}
\begin{equation}z_t=\ha{S,t}{}(s_{0:t})\coloneqq w_{0:t}\ast s_{0:t}=\sum_{i=0}^t w_i s_{t-i}\end{equation}
\begin{equation}
\begin{pmatrix}
z_0\\
z_1\\
z_2\\
\dots\\
z_t
\end{pmatrix} = 
\begin{pmatrix}
w_0 & 0 & 0 & \dots & 0\\
w_1 & w_0 & 0 & \dots & 0\\
w_2 & w_1 & w_0 & \dots & 0 \\
\dots & \dots & \dots & \dots & \dots \\
w_t & w_{t-1} & w_{t-2} & \dots & w_0 \\
\end{pmatrix} \cdot
\begin{pmatrix}
s_0 \\
s_1 \\
s_2 \\
\dots \\
s_t
\end{pmatrix}
\end{equation}
\begin{equation}
z_{0:t}=W_t s_{0:t}
\end{equation}
Therefore, when $w_0\neq 0$,
\begin{equation}
s_{0:t}=W_t^{-1}z_{0:t}
\end{equation}
Thus, using the concept of POMDP MDS, $W_t^{-1}$ determines all observations $z_t$ required to decode the current state $s_t$.
\begin{example}
Given $w_{0:\infty}=(1,w,0^\ast)$, then
\begin{equation}
W_4=
\begin{pmatrix}
1 & 0 & 0 & 0\\
w & 1 & 0 & 0\\
0 & w & 1 & 0\\
0 & 0 & w & 1
\end{pmatrix};~W_4^{-1}=
\begin{pmatrix}
1 & 0 & 0 & 0\\
(-w)^1 & 1 & 0 & 0\\
(-w)^2 & (-w)^1 & 1 & 0\\
(-w)^3 & (-w)^2 & (-w)^1 & 1
\end{pmatrix}
\end{equation}
\end{example}
This example serves as the basis for the visualization in Fig. \ref{fig:aggmds}.
When $w$ is positive, the coefficients of each observation alternate in sign; when $w$ is negative, all coefficients of the observations are positive.

\begin{figure}[h]
\centering
\includegraphics[width=0.5\columnwidth]{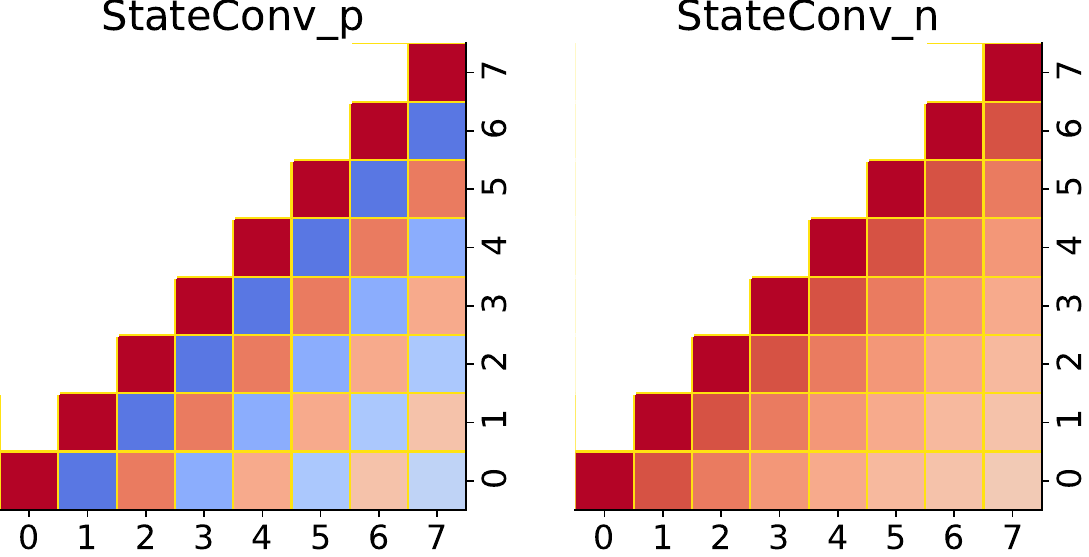}
\caption{Visualization of MDSs of convolution.
Left figure coefficients: $1,w$; Right figure coefficients: $1,-w$, $(w>0)$. 
In $W^{-1}_t$, positive elements are red, negative blue, and darker colors indicate larger absolute values.}
\label{fig:aggmds}
\end{figure}

\subsection{Generalized Implementation}
\subsubsection{Generalized Convolution}
In fact, besides using the addition and scalar multiplication of vectors in Euclidean space, there are other ways to implement these two operations of convolution, as long as they still form a linear space.
For example, given any Abelian group operator $+: S\times S\to S$, a simple way is to pick an arbitrary bijection $\eta:S\to S$ and define the addition operator to be $\oplus: (s,s^\prime)\mapsto\eta^{-1}(\eta(s)+\eta(s^\prime))$, the scalar multiplication operator to be $\odot:(w,s)\mapsto\eta^{-1}(w\cdot\eta(s))$.
It suffices to verify that such a definition still satisfies the requirements of a linear space. 

\subsubsection{Discrete Observation Space}
If the state space is discrete, there are two approaches to aggregate states for constructing an POMDP: either continuousizing the states and processing them in the same way as for continuous state spaces, or preserving the discreteness of the observation space.

The continuousization method is simple to implement. It merely requires mapping discrete states to integer codes and then treating them as real numbers.

Preserving discreteness, however, does not allow such an operation. It is necessary to define addition and scalar multiplication operations on discrete sets (e.g., $\{0,1,2,\dots,N-1\}$).
In this case, modular-$N$ addition and modular-$N$ scalar multiplication can be used to replace the corresponding operations in the continuous state space, and the invertibility condition of $W_t$ becomes that $w_0$ is coprime with $N$.
Since all structures of Abelian groups on finite sets are known, they are isomorphic to the direct sum of certain cyclic groups.
Even non-Abelian groups have complete classification theorems.
Coupled with the arbitrary choice of permutations on the state set, this provides a great variety of options for constructing convolution-based HAS.
\section{Reward Redistribution}
\label{app:rewardredist}
\subsection{Reward Redistribution and Optimality Preserving Property}
\label{subsec:rewardredist}
\begin{definition}[\textbf{Return-Equivalent POMDP}]
Two POMDP $\pomdp=\pomdpdef$ and $\pomdp^\prime=\pomdpdefp$ are return-equivalent if $Z=Z^\prime,~\rho_0=\rho_0^\prime,~A=A^\prime,~\sum_{r\in\real}T_t(z,r|z_{0:t},a_{0:t})=\sum_{r\in\real}T_t^\prime(z,r|z_{0:t},a_{0:t})$ and for all $z_0\in Z$ and any policy $\pi=\pi_{0,\infty}$, $V_{\pomdp}^\pi(z_0)=V^\pi_{\pomdp^\prime}(z_0)$.
\end{definition}
\textbf{Note}: For two return-equivalent POMDPs, their observation transition dynamics are identical, while their reward mechanisms are not entirely the same: although the rewards of each trajectory have different temporal distributions, their total amounts remain the same.
\begin{definition}[\textbf{Reward Redistribution}]
A reward redistribution is a map $\mathscr{R}:\pomdp\mapsto\pomdp^\prime$ where $\pomdp,\pomdp^\prime$ are return-equivalent POMDPs.
\end{definition}
\begin{theorem}[\textbf{Optimality Preserving Property of Reward Redistribution}]
For any reward redistribution $\mathscr{R}$ and any POMDP $\pomdp$, $\pomdp=\pomdpdef,~\pomdp^\prime=\mathscr{R}(\pomdp)=\pomdpdefp$ have the same set of optimal policies.
\end{theorem}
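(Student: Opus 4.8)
The plan is to observe that the statement follows almost immediately once the definition of return-equivalence is unwound, since that definition already bundles in the crucial fact that the two HDPs share identical value functions for every policy. First I would record that, because $\hdp^\prime=\mathscr{R}(\hdp)$ is by definition a reward redistribution, $\hdp$ and $\hdp^\prime$ are return-equivalent; in particular $Z=Z^\prime$ and $A=A^\prime$. This guarantees that both processes have exactly the same admissible policy space, namely all maps $\pi:(Z\times A)^\ast\to\Delta_A$. Hence the phrase ``same set of optimal policies'' is well-posed: we are comparing two subsets of one common policy set, not objects living in different spaces.

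Next I would invoke the value-function clause of return-equivalence directly. By definition, for every policy $\pi$ and every initial observation $z_0\in Z$ we have $V_{\hdp}^\pi(z_0)=V_{\hdp^\prime}^\pi(z_0)$. Since this equality holds pointwise over the entire shared policy space and over all $z_0$, the two maximization problems are literally optimizing the same function of $\pi$ for each fixed $z_0$. Therefore $\arg\max_\pi V_{\hdp}^\pi(z_0)=\arg\max_\pi V_{\hdp^\prime}^\pi(z_0)$ for every $z_0\in Z$.

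Finally I would close by appealing to the definition of an HDP optimal policy: $\pi^\ast$ is optimal for $\hdp$ iff $\pi^\ast\in\arg\max_\pi V_{\hdp}^\pi(z_0)$ for all $z_0$, and analogously for $\hdp^\prime$. Because the two $\arg\max$ sets coincide at every $z_0$, the condition ``$\pi^\ast$ optimal for $\hdp$'' is logically equivalent to ``$\pi^\ast$ optimal for $\hdp^\prime$'', which is exactly the claimed equality of the two optimal-policy sets.

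The main obstacle is not the logic but recognizing that essentially nothing remains to be proved: the substantive content, namely that reallocating rewards within trajectories while preserving each trajectory's total return leaves all value functions invariant, has been folded into the definition of return-equivalence itself. The genuinely nontrivial work lies one level earlier, in checking that a concrete wrapper actually produces a return-equivalent HDP; for the $k$-step delay of the main text this amounts to verifying that the discounted bookkeeping $r^\prime_{t-1}=\sum_{i=0}^{k-1}\frac{r_{t-1-i}}{\gamma^i}$ preserves the discounted sum $\sum_t\gamma^t r_t$ along every terminal trajectory. That verification, however, is external to the present statement, which takes return-equivalence as a hypothesis.
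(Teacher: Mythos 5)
Your proposal is correct and follows essentially the same route as the paper's proof: both arguments reduce immediately to the value-function clause of return-equivalence, with the paper spelling out the two inclusions via symmetry where you simply note that the pointwise equality $V_{\hdp}^\pi(z_0)=V_{\hdp^\prime}^\pi(z_0)$ makes the two $\arg\max$ sets literally identical. Your closing observation that the real work lives in verifying that a concrete wrapper (such as the $k$-step delay) is in fact a reward redistribution matches the paper's treatment, which likewise defers that check to the reward-delay definition.
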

\begin{proof}
For any optimal policy $\pi$ of $\pomdp$, $\forall z_0\in Z$,
\begin{equation}\pi^\ast\in\arg\max_{\pi}V^\pi_{\pomdp}(z_0)=\arg\max_\pi\ex_{h_{0:\infty}\sim(\pi_{0:\infty},T_{0:\infty}|z_0)}\sum_{t=0}^\infty r_t\end{equation}
Since $\pomdp,\pomdp^\prime$ are return-equivalent, 
\begin{equation}\ex_{h_{0:\infty}\sim(\pi_{0:\infty},T_{0:\infty}|z_0)}\sum_{t=0}^\infty r_t=\ex_{h^\prime_{0:\infty}\sim(\pi_{0:\infty},T^\prime_{0:\infty}|z_0)}\sum_{t=0}^\infty r_t^\prime\end{equation}
where $h_{0:\infty}=(z_{0:\infty},a_{0:\infty},r_{0\infty}),~h^\prime_{0:\infty}=(z_{0:\infty},a_{0:\infty},r^\prime_{0\infty})$. Therefore, \begin{equation}\pi^\ast\in\arg\max_{\pi}\ex_{h^\prime_{0:\infty}\sim(\pi_{0:\infty},T^\prime_{0:\infty}|z_0)}\sum_{t=0}^\infty r_t^\prime=\arg\max_\pi V^\ast_{\pomdp^\prime}(z_0)\end{equation}
$\pi^\ast$ is an optimal policy of $\pomdp^\prime$.\\
By symmetry of return-equivalence relation of $\pomdp,\pomdp^\prime$, for any optimal policy $\tilde{\pi}^\ast$ of $\pomdp^\prime$, $\tilde{\pi}^\ast$ is also an optimal policy of $\pomdp$.
\end{proof}
\subsection{Reward Delay}
\label{subsec:rewdelay}
\begin{definition}[\textbf{Reward Delay}]
Reward delay $\mathscr{D}_k$ is a reward redistribution that maps POMDP $\pomdp=\pomdpdef$ (with trajectory set $H$) to $\pomdp^\prime=\langle\rho_0,Z,A,\{T_t^\prime\}_{t=0}^\infty\rangle$ (with trajectory set $H^\prime$) such that for any trajectory $h_{0:t}=(z_{0:t},a_{0:t},r_{0:t})\in H$, its corresponding trajectory $h^\prime_{0:t}=(z_{0:t},a_{0:t},r^\prime_{0:t})$ in $H^\prime$ satisfies (if $0\le T\le\infty$ is the max timestep)
$$r^\prime_{0:k-1}=0,~r^\prime_{k:T-2}=r_{0:T-k-2},~r^\prime_{T-1}=\sum_{i=0}^{k-1}r_{T-i-1}$$
The transition dynamics $T^\prime_t$ satisfy
$$T_t^\prime(z_{t+1},0|z_{0:t},a_{0:t})=\sum_{r_t\in\real} T_t(z_{t+1},r_t|z_{0:t},a_{0:t}),~(t<k)$$
$$T_t^\prime(z_{t+1},r^\prime_t|z_{0:t},a_{0:t})=\mathbf{1}_{r_{t-k}=r^\prime_t}\sum_{r_t\in\real}T_t(z_{t+1},r_t|z_{0:t},a_{0:t}),~(k\le t\le T-2)$$
$$T_{T-1}^\prime(z_T,r^\prime_{T-1}|z_{0:T-1},a_{0:T-1})=\mathbf{1}_{\sum_{i=0}^{k-1}r_{T-i-1}=r_{T-1}^\prime}\sum_{r_{T-1}\in\real}T_{T-1}(z_{T},r_{T-1}|z_{0:T-1},a_{0:T-1})$$
\end{definition}

\textbf{Note}: It should be noted that delaying all rewards of each time step to appear after a certain number of moments is not the only way to implement delayed rewards. Rewards can also be delayed in a certain proportion, where both the proportion and the delay time can be variable. It is obvious that reward delay $\mathscr{D}_k$ is a reward redistribution.

If there is a discount factor $\gamma$, the delayed reward should be 
$$r^\prime_{0:k-1}=0,~r^\prime_{k:T-2}=\frac{r_{0:T-k-2}}{\gamma^k},~r^\prime_{T-1}=\sum_{i=0}^{k-1}\frac{r_{T-1-i}}{\gamma^i}$$

\section{POMDP Environments Derived from Existing}
\label{app:derivedfromexisting}
\subsection{\textbf{StateConv$(k)$}}
\begin{figure}[h]
\centering
\includegraphics[width=0.5\columnwidth]{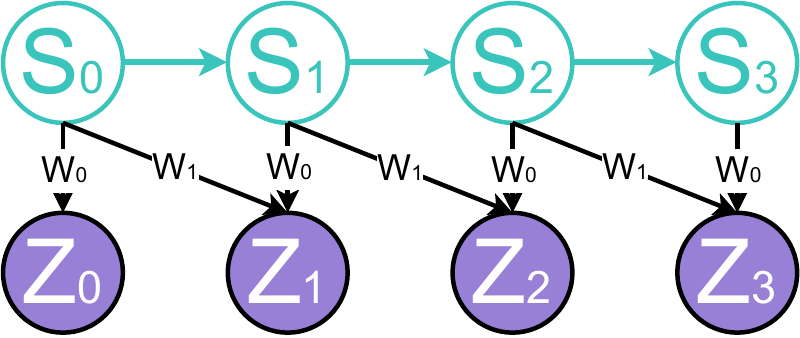}
\caption{A convolution-based HAS (with coefficients $w_0,w_1$) for aggregating MDP states into POMDP observations.} 
\label{fig:hasvis}
\end{figure}
\noindent{\textbf{Coefficients}:
$w_0^k=1,~w_!^k=1-\frac{1}{2^k},~(k\le 4),~w_1^5=1$.}\\
\textbf{Transition}:
$z_t=w_0^ks_t+w_1^ks_{t-1},~(t>0)$, $z_0=w_0^ks_0$.\\
\textbf{Description}:
Corresponding to Sec. \ref{subsec:convhas}, \textbf{StateConv$(k)$} is a series of environment wrappers. By wrapping a given MDP environment, it constructs the observation sequence of the POMDP by convolving the states of the MDP, while keeping the reward mechanism unchanged.
As $k$ increases, the current observation contains more information about historical states, which requires the memory model to remember more historical state information to decode the current Markov state, thus making the problem more difficult for the memory model.
This wrapper can be applied to various tasks with continuous observation spaces, provided that the states support scalar multiplication and addition operations.
\subsection{\textbf{RewardDelay$(k)$}}
\begin{figure}[h]
\centering
\includegraphics[width=0.8\columnwidth]{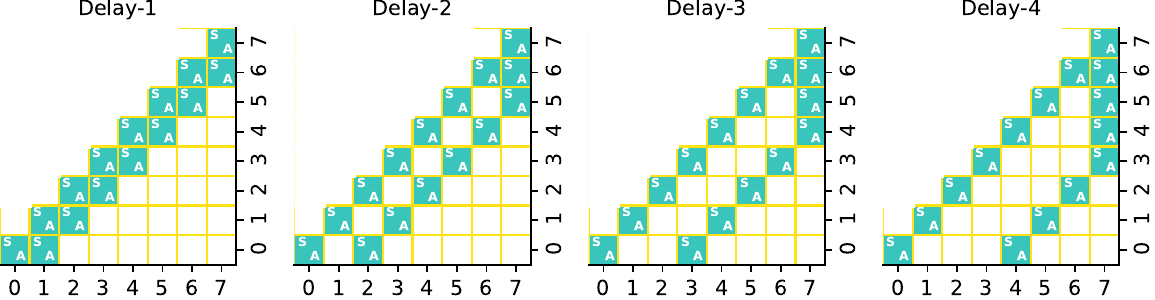}
\caption{MDS of reward delay. The next state depends on the current state and action, while the reward depends on states and actions from $k$ steps prior, illustrating the composition of the final reward as a discounted sum of the past $k$ rewards.}
\label{fig:mdpvis}
\end{figure}

\noindent{\textbf{Transition}: The reward appearing at moment $t$ in the given environment is delayed to moment $t+8k$, with the rewards at times $0$ to $8k-1$ set to zero. Meanwhile, the rewards in the $k$ moments before termination are all moved to the termination time. The delay is compensated according to the power of the discount.}\\
\textbf{Description}:
Corresponding to Sec. \ref{subsec:rewdelay}, \textbf{RewardDelay$(k)$} delays rewards while ensuring that the optimal policy remains unchanged, still being a Markov policy in the original MDP.
In this case, for agents using a memory model, the difficulty does not lie in remembering historical observations, constructing belief states, and making better decisions, but rather in learning a policy that relies solely on the current observation.
This series of wrappers tests the ability of the memory model to ignore irrelevant information during the learning process.
As $k$ increases, the number of steps by which rewards are delayed increases; since rewards are no longer generated immediately, the amount of irrelevant information that the memory needs to ignore also grows, which increases the fitting difficulty for the memory model.

\section{Experiments Settings}
\label{app:exp}
\subsection{Settings}
\textbf{CPU}: Intel(R) Xeon(R) Gold 6348 CPU @ 2.60GHz; 28 Cores, 112 Threads. \\
\textbf{GPU}: NVIDIA GeForce RTX 3080, Driver Version: 550.127.05, CUDA Version: 12.4; 8 GPUs. \\
\textbf{Memory}: 512GB. \\
\textbf{OS}: 24.04.1 LTS (Noble Numbat). \\
\textbf{Packages}: python[3.10.0]; numpy[1.25.0]; torch[2.7.1+cu126]; ray[2.46.0]; gymasium[1.0.0]; scipy[1.15.3]; popgym.
\subsection{Configs and Hyperparameters}
\textbf{Repeated Runs}: $10$ times for each experiment. \\
\textbf{Models}: 2-layer MLP with LeakyReLU activations, for observation encoder, policy head, and value head.\\
\textbf{Memory Models}: The standard baseline implementations from the POPGym library were used, with positional encoding removed to adapt to variable episode lengths.\\
\textbf{Hidden Size for Linear Layers and Memory}: 64.\\
\textbf{Backup through Time Truncation Length}: 128 for \textbf{StateConv} series and 64 for \textbf{RewardWhenInside} and \textbf{LinProc} series.\\
\textbf{Algorithm}: PPO using Python ray-rllib, modified from the implementation in POPGym. \\
Since DQN does not adapt well to the POPGym and RLlib frameworks, we implement it ourselves. \\
\textbf{Discount Factor $\gamma$}: $0$ for \textbf{LinProc} series and \textbf{RewardWhenInside} wrapped with \textbf{StateConv} series (Since maximizing the reward at each individual step in these environments ensures the maximization of the total reward over the entire trajectory, $\gamma$ is set to $0$), $0.9$ for other environments.\\